\documentclass{article}
\usepackage[T1]{fontenc}
\usepackage{iclr2027_conference,times}
\usepackage{amsmath,amssymb,amsthm,booktabs,graphicx,etoolbox,xcolor,hyperref,url}
\usepackage{tikz}
\usetikzlibrary{arrows.meta,fit,backgrounds}
\makeatletter
\newcounter{algorithm}
\renewcommand{\thealgorithm}{\arabic{algorithm}}
\newcommand{\fps@algorithm}{tbp}
\newcommand{\ftype@algorithm}{4}
\newcommand{\ext@algorithm}{loa}
\newcommand{\fnum@algorithm}{Algorithm~\thealgorithm}
\newenvironment{algorithm}{\@float{algorithm}}{\end@float}
\makeatother
\newtheorem{theorem}{Theorem}[section]
\newtheorem{proposition}[theorem]{Proposition}
\newtheorem{lemma}[theorem]{Lemma}

\newcommand{\R}{\mathbb{R}}

\graphicspath{{figures/}}
\title{Broken Symmetry in BF16 Attention: Why FlashAttention Gradients Blow Up Late in Training}
\author{Junlin Chen$^{1,5}$\quad Daize Dong$^{1}$\quad Huanwei Di$^{1}$\quad Haolong Jia$^{1}$\\
\textbf{Jiawei Wu$^{1}$\quad Haotian Xie$^{1}$\quad Mingkai Zheng$^{1}$\quad Yang Li$^{1}$}\\
\textbf{Leshang Chen$^{2}$\quad Huishu Wang$^{3}$\quad Eric P. Xing$^{4,5}$\quad Hongyi Wang$^{1,5}$}\\
$^{1}$Rutgers University\quad $^{2}$Oracle\quad $^{3}$New York University\\
$^{4}$MBZUAI\quad $^{5}$Carnegie Mellon University\\
\texttt{junlin.chen110@rutgers.edu}}

\iclrfinalcopy
\newcommand{\GProjAccuracyTable}{%
\begin{table}[t]
\centering\small
\caption{Median full-tensor relative $L_2$ error (\%) across eight captures from the 450M model, comparing public BF16 gradients with uncast FP64 references. GProj includes the $dK$ correction unless stated otherwise. $^\dagger$The same input files are used, but the FP32 study evaluates the same mathematical target with a separate FP64 implementation (Appendix~\ref{app:gproj-experiments}). $^\ddagger$QY is the measured source-faithful CUDA/C++/ATen implementation with BF16 O/LSE and the original tile clamp, evaluated with a separately implemented centered FP64 reference. It is not an internal FA3 kernel. No BF16-floor acceptance is implied.}
\label{tab:gproj-accuracy}
\begin{tabular}{lrrr}
\toprule
Method & $dQ$ (\%) & $dK$ (\%) & $dV$ (\%) \\
\midrule
FA3 & 773 & 85.6 & 0.369 \\
QY-shift (source-faithful)$^\ddagger$ & 1307 & 264 & 80.5 \\
FA3-SBS & 219 & 13.3 & 0.334 \\
GProj-Q & 0.342 & 13.3 & 0.334 \\
GProj & 0.342 & 0.371 & 0.334 \\
\midrule
FP32 attention (eager)$^\dagger$ & 0.384 & 0.371 & 0.165 \\
FP32 attention (fused)$^\dagger$ & 0.385 & 0.371 & 0.165 \\
\bottomrule
\end{tabular}
\end{table}
}
\newcommand{\GProjTrainingTable}{%
\begin{table}[t]
\centering\small
\caption{Complete training-step time on one H200, batch one, 4096 tokens. Both panels use the median of three round means, with 15 distinct measured updates per round. Overhead uses only the FA3 row in the same panel. Checkpoint-8000 weights and learning-rate phase, fresh optimizer state; no inter-rank communication. Memory is the maximum allocated GiB over three workers. Panels are separate jobs/nodes; no cross-panel speed ratio is claimed.}
\label{tab:gproj-training}
\begin{tabular}{lrrr}
\toprule
Method & ms/update & vs. FA3 & peak GiB \\
\midrule
\multicolumn{4}{l}{\textit{(1) GProj comparison}} \\
FA3 & 227.357 & +0.00\% & 9.191 \\
FP32 attention (eager) & 719.420 & +216.43\% & 9.182 \\
GProj & 238.120 & +4.73\% & 9.191 \\
\midrule
\multicolumn{4}{l}{\textit{(2) optimized FP32 comparison}} \\
FA3 & 231.031 & +0.00\% & 9.191 \\
FP32 attention (eager) & 734.841 & +218.07\% & 9.182 \\
FP32 attention (fused) & 308.276 & +33.43\% & 9.191 \\
\bottomrule
\end{tabular}
\end{table}
}
\newcommand{\GProjPerCaptureTable}{%
\begin{table}[t]
\centering\small
\caption{Per-capture relative $L_2$ errors (\%) for the complete GProj implementation and fused FP32 baseline. Different FP64 evaluation implementations are disclosed in the text. Captures come from the from-scratch FA3 run and are not independent training replicates.}
\label{tab:gproj-per-capture}
\begin{tabular}{rrrrrrr}
\toprule
Capture & GProj $dQ$ & $dK$ & $dV$ & FP32 $dQ$ & $dK$ & $dV$ \\
\midrule
0 & 0.3223 & 0.2942 & 0.1699 & 0.4203 & 0.3290 & 0.1592 \\
1 & 0.2504 & 0.2397 & 0.1924 & 0.2245 & 0.2216 & 0.1648 \\
2 & 0.2669 & 0.3183 & 0.1734 & 0.2799 & 0.3411 & 0.1603 \\
3 & 0.2668 & 0.2855 & 0.1966 & 0.4146 & 0.3760 & 0.1639 \\
4 & 1.2604 & 1.8577 & 1.0888 & 1.0951 & 2.7637 & 0.1771 \\
5 & 0.4158 & 0.5350 & 0.4715 & 0.3552 & 0.3657 & 0.1648 \\
6 & 0.9490 & 1.3361 & 1.1936 & 0.9065 & 2.5611 & 0.1745 \\
7 & 0.3612 & 0.4240 & 0.5267 & 0.3187 & 0.3764 & 0.1673 \\
\bottomrule
\end{tabular}
\end{table}
}

\newcommand{\ScratchQyTokens}{13.2B}

\newcommand{\RfourteenCaption}[2]{\begingroup
\let\RfourteenOriginalCaption\caption
\renewcommand{\caption}[1]{\RfourteenOriginalCaption{#1}}#2\endgroup}
\let\RfourteenAccuracy\GProjAccuracyTable
\renewcommand{\GProjAccuracyTable}{\RfourteenCaption{Full-tensor relative
$L_2$ error (\%) of BF16 gradients against an FP64 reference, median over
eight captured attention inputs of the 450M model. FA3-SBS and the GProj rows
share identical forward outputs. $^{\dagger,\ddagger}$See
Appendix~\ref{app:gproj-experiments}.}{\RfourteenAccuracy}}
\let\RfourteenTraining\GProjTrainingTable
\renewcommand{\GProjTrainingTable}{\RfourteenCaption{Complete training-step
time and peak memory; overhead is relative to the FA3 row of the same panel.
Measurement protocol in Appendix~\ref{app:gproj-experiments}.}{\RfourteenTraining}}
\let\RfourteenPerCapture\GProjPerCaptureTable
\renewcommand{\GProjPerCaptureTable}{\RfourteenCaption{Per-capture relative
$L_2$ errors (\%) of GProj and fused FP32 attention against FP64 references;
Table~\ref{tab:gproj-capture-map} identifies the captures.}{\RfourteenPerCapture}}
\definecolor{gprojnew}{HTML}{C75B00}
\definecolor{fathree}{HTML}{2F4B6E}\definecolor{fathreefill}{HTML}{EEF2F7}
\definecolor{castred}{HTML}{C62828}\definecolor{castredfill}{HTML}{FBEAEA}
\definecolor{gproj}{HTML}{C75B00}\definecolor{gprojfill}{HTML}{FFF3E6}\definecolor{gprojbg}{HTML}{FFF9F2}
\newcommand{\gpnew}[1]{\leavevmode{\color{gprojnew}#1}}

\begin{document}
\maketitle
\begin{abstract}
BF16 is now standard in large-scale pretraining, including in fused attention
kernels such as FlashAttention, and these kernels are widely trusted. When we
used FlashAttention-3 to pretrain a 450M-parameter transformer on 50B tokens,
however, we ran into a problem: training was healthy for 25B tokens, then the
gradient norm grew a thousandfold and the loss ended 0.2 nats above FP32
attention, without a single NaN. Recomputing the
attention backward of just two layers in FP32 removes almost all of the excess
gradient. Part of the cause is known: a fused multiply-add in the forward
softmax, so far treated as an extreme-input NaN case and never fixed in
FlashAttention-3. Repairing it stops the blow-up, but the query gradient is
still wrong by more than its own size, and training still drives attention
logits to thousands of times their size under accurate gradients. The remaining
error comes from a broken conservation law. The softmax score gradient sums to
zero along every row, which makes the query gradient blind to where the keys sit
as a group; rounding it to BF16 leaves a small nonzero sum that leaks the mean
key into the gradient, and the leak grows exactly as late training makes keys
large and attention sharp. We introduce GProj (gauge projection), which restores the zero sum after
the cast with two rank-one corrections per row. It cuts the remaining median query/key
gradient errors from 219\%/13\% to 0.34\%/0.37\%, on par with FP32 attention,
for 4.7\% more time per training step. In matched from-scratch runs it trains
to the same loss as FP32 attention, while FlashAttention-3 and key smoothing
both destabilize.
\end{abstract}

\section{Introduction}
\label{sec:introduction}
\begin{figure}[!b]
\centering
\includegraphics[width=\linewidth]{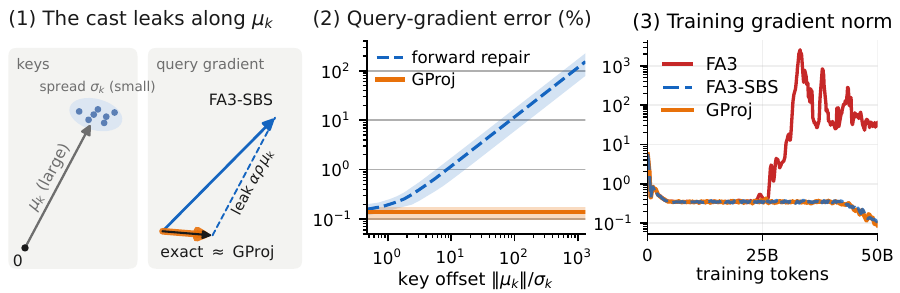}
\vspace{-1.6em}
\caption{(1) The exact score gradient sums to zero along each row; its BF16 cast $t$ leaves a
row mass $\rho\neq0$, which adds a leak $\alpha\rho\mu_k$ along the mean key to the query gradient,
while GProj removes $\rho$ after the cast. (2) Synthetic rows with an exact forward and only the
score gradient cast to BF16: the error with a perfect forward grows with the key offset, so no forward
repair can remove it, while GProj stays at the BF16 floor (median and interquartile range over 256
rows; Appendix~\ref{app:gproj-theory}). (3) FA3 blows up late in training; GProj does not
(Section~\ref{sec:experiments}).}
\label{fig:teaser}
\end{figure}
Low-precision arithmetic is what makes large-scale pretraining affordable
\citep{micikevicius2018mixed,kalamkar2019bfloat16}. Frontier and fully open
models are pretrained with BF16 or FP8 matrix products over trillions of tokens
\citep{grattafiori2024llama3,deepseek2024v3,kimi2025k2,k2horizon2026}, with
attention computed by fused kernels such as FlashAttention
\citep{dao2022flashattention,dao2024flashattention2,shah2024flashattention3}
that keep only BF16 operands and a few FP32 statistics. These kernels are
validated against a higher-precision reference on random, well-conditioned
inputs and then trusted in training. The trust is rarely tested where it
matters, because a wrong gradient does not announce itself: it causes no crash
and no NaN, and the worse model it produces is easily blamed on hyperparameters
or data. Instabilities that do surface are usually traced to the model or the
optimizer, such as growing attention logits \citep{dehghani2023vit22b,wortsman2023smallscale},
attention entropy collapse \citep{zhai2023entropy} or outlier activations
\citep{fishman2024fp8}; the numerical fidelity of the attention kernel itself is
rarely examined \citep{golden2024stable}.

We ran into such a case while pretraining a 450M-parameter transformer on
50B tokens with FlashAttention-3 (FA3). For the first 25B tokens, training was
healthy. Then the gradient norm grew a thousandfold, and the loss drifted up and
never recovered, ending 0.2 nats above an otherwise identical run with FP32
attention (Section~\ref{sec:experiments}). Nothing overflowed and
nothing became NaN. To find the source, we recomputed the attention backward of
the two most affected layers in FP32 while keeping every forward activation
bitwise identical; the excess gradient almost disappeared
(Section~\ref{sec:training-observation}). The fault was in the attention backward, which
raises the question this paper answers: how can a kernel whose outputs are
accurate return gradients that are this wrong?

Part of the answer is already known. The community has traced a BF16 NaN in
attention to a fused multiply-add in the forward softmax: to save an
instruction, ``scale'' and ``subtract the row maximum'' are merged into one
rounded operation, so the largest score no longer maps exactly to zero
\citep{pytorch2024fma}. Because the report involved extreme logits, the fix was
reasonably treated as a corner case: it became an option in FlashAttention-2,
off by default, and never reached FA3 \citep{flashattn2024unfusefma}. Related
remedies followed, from a dynamic softmax shift \citep{qiu2026lowprecision} to
keeping the attention output in FP32 \citep{kimi2026k3}. When we repair the
forward in the same spirit, subtracting the maximum before scaling (FA3-SBS),
the thousandfold growth disappears and training stays stable to the end. The
known fix seemed to be enough.

It was not. With the forward repaired, the query gradients of the affected
layers are still wrong by more than their own size (Section~\ref{sec:experiments}),
and the run still drives these layers into extreme attention: by 33.6B tokens
their typical attention score is about $4$--$5\times10^4$, against under 10 in
the same model trained with accurate gradients (Section~\ref{sec:two-layers}).
Stable training, it turns out, is not evidence of correct gradients, and the
remaining error has a different origin, one that no forward repair can reach.

That origin is a broken symmetry. The gradient with respect to a query should
depend only on how the keys differ from one another, never on where they sit as
a group. Every exact attention backward enforces this translation symmetry
through a conservation law: the softmax score gradient sums to zero along each
row, and that zero sum is what cancels the common part of the keys. FA3 rounds
this score gradient to BF16 before multiplying it with the keys. The rounding
leaves a small nonzero row sum, and the product turns it into a spurious term
proportional to the mean key, which we call the leak (Figure~\ref{fig:teaser}, panel 1).
The leak grows with how far the keys sit from the origin relative to their spread
(panel 2), and when attention also concentrates on a single key, the true gradient
is nearly zero, so the leak outweighs the signal by orders of magnitude. Because the
error is created after the forward pass has finished, a four-key example with an
exact forward already exhibits it (Section~\ref{sec:channel-two}).

This view also explains why the existing remedies stop halfway. The FMA fix,
FP32 outputs and dynamic shifting all act on the forward pass. Key smoothing
\citep{zhang2026sagebwd} subtracts the average key before the kernel, which
shrinks the mean key the leak is multiplied by but leaves the broken sum intact.
When attention concentrates on a single key, the mean that matters is that key
itself, which smoothing does not remove: in our runs, key smoothing delays the
blow-up by about 3B tokens but does not prevent it.

The natural fix is therefore to restore the law itself. GProj (gauge projection) puts the zero row
sum back after the cast, using the actual mass of the BF16 probabilities the
kernel multiplies, with two rank-one corrections per row: one inside the existing backward pass and one
in a second pass for the key gradient. It brings the query and key gradient errors down to the level of FP32
attention, for 4.7\% more time per training step. Trained from
scratch, it stays stable throughout and ends at the same loss as FP32 attention.

In summary, this paper makes three contributions:\par\nopagebreak
\begin{itemize}
\item[$\bullet$] We identify a silent late-training failure of BF16
FlashAttention-3 and trace it to the attention backward.
\item[$\bullet$] We explain it through a conservation law of the attention
gradient that the BF16 cast breaks, an error no forward repair can remove.
\item[$\bullet$] We propose GProj, which restores the law and matches FP32
attention in gradient accuracy and final loss at 4.7\% extra cost.
\end{itemize}

\section{Related Work}
\label{sec:related-work}
\paragraph{Low-precision training and attention kernels.}
Mixed-precision and BF16 training \citep{micikevicius2018mixed,kalamkar2019bfloat16}
and, more recently, FP8 training \citep{peng2023fp8lm,fishman2024fp8} trade
precision for throughput; rounding can bias updates, which motivated stochastic
rounding and careful BF16 recipes \citep{gupta2015limited,zamirai2020bf16}.
FlashAttention and its successors reduce attention memory traffic
\citep{dao2022flashattention,dao2024flashattention2,shah2024flashattention3}, and
quantized attention pushes precision further down
\citep{zhang2025sageattention2,zhang2025sageattention,cheng2025pasa}.
\citet{golden2024stable} quantify the numerical deviation FlashAttention introduces
during training, mainly through its forward outputs; we find a backward error that
can dominate the gradient. The FMA cancellation hazard behind FA3-SBS is known \citep{pytorch2024fma};
FA3-SBS repairs the saved output, not the score-gradient cast.

\paragraph{Softmax reformulations.}
Beyond classical shifted-softmax analysis \citep{blanchard2021softmax,pebay2008covariance},
\citet{qiu2026lowprecision} connect saved-output error to the backward
reduction and biased updates, motivating conditional dynamic shifting.
Kimi K3 adopts the related remedy of keeping
the attention output in FP32 during training \citep[Section~2.1]{kimi2026k3}.
Both address only the saved-output channel.

\paragraph{Conservation-aware low-precision kernels.}
SageBwd uses the exact zero row sum of the score gradient to motivate smoothing keys
before quantization \citep[Sections~4.2 and~6]{zhang2026sagebwd}.
Direct-P matches normalization to consumed probabilities for FP4
\citep[Section~4.3]{hu2026fp4}, and MXAttention normalizes by the mass of the
quantized exponentials \citep{yu2026mxattention}. GProj instead measures the mass after the BF16
cast and projects the contractions with $\rho/m$ (Section~\ref{sec:method}).

\paragraph{Training-stability interventions.}
Growing attention logits are a known source of instability, addressed by
query--key normalization \citep{henry2020qknorm,dehghani2023vit22b,wortsman2023smallscale},
entropy control \citep{zhai2023entropy}, or direct rescaling of query and key
weights during optimization \citep{kimi2025k2,liu2025muonscalable,heo2021adamp}.
These methods change the model or the optimizer; we show that the logit growth
itself can be driven by the kernel's gradient error, which GProj repairs.

\section{Where the Gradient Error Comes From}
\label{sec:training-observation}
\label{sec:mechanism}
\label{sec:diagnosis}
A wrong gradient is hard to see, because nothing in the forward pass changes.
In the FA3 run of Figure~\ref{fig:scratch-progress}, a 450M-parameter
transformer (Appendix~\ref{app:model-configuration}), loss and gradient norm
look normal for 25B tokens; the gradient norm then passes 10 by 30B tokens and
the loss degrades, with no NaN at any point (Figure~\ref{fig:main-diagnosis}).
This section first shows that the excess gradient is a numerical error of the
attention backward, and then derives where that error comes from.

\subsection{The excess gradient is a numerical error}
\label{sec:two-layers}
The failure is confined to two layers. We measure the typical score magnitude
of a layer by its logit scale $\alpha\,\mathrm{rms}(Q)\,\mathrm{rms}(K)\sqrt{d}$,
with $\alpha$ the attention scale and $d$ the head dimension. It stays moderate
in 22 of the 24 layers but grows by orders of magnitude in layers 5 and 11,
where queries and keys become large and almost every query puts nearly all of
its attention on a single key, the near one-hot pattern also seen in
attention sinks and massive activations
\citep{xiao2024streaming,sun2024massive,gu2025sink}.

Large logits alone do not make a gradient wrong, so we separate the two
directly. We freeze the model at 33.6B tokens and recompute only the attention
backward of these two layers in FP32, leaving every attention output and the
loss bitwise identical. The model-gradient norm falls by more than two orders
of magnitude (Figure~\ref{fig:main-diagnosis}, panel 3): almost all of the
observed gradient is numerical error from two attention backward passes.

The error and the large-logit regime feed each other. In the matched
from-scratch runs (Section~\ref{sec:experiments}), which change only the
attention computation, every variant whose backward keeps the BF16 error,
including the forward repair of Section~\ref{sec:channel-one} and key
smoothing, drives layers 5 and 11 into this regime, while FP32 attention and
GProj stay out of it (Table~\ref{tab:regime-audit}). Since FA3-SBS and GProj differ only in the
backward, the gradient error itself drives the logits up; larger logits in turn
come with larger keys, which amplify the error (Theorem~\ref{thm:fwd-vs-proj}).

\subsection{A conservation law of the attention backward}
\label{sec:setting}
For one query $q\in\R^d$, let $L$ be its causal support, $k_j\in\R^d$
and $v_j\in\R^{d_v}$ the keys and values, $\alpha>0$ the attention scale, and
$u=\partial\mathcal L/\partial o$ the incoming derivative. Define
\begin{equation}
 \begin{gathered}
 s_j=\alpha q^\top k_j,\quad
 p_j=\frac{e^{s_j}}{\sum_{\ell\in L}e^{s_\ell}},\quad o=\sum_jp_jv_j,\\
 a_j=u^\top v_j,\quad \mu_a=\sum_jp_ja_j,\quad \mu_k=\sum_jp_jk_j,\quad g_j=p_j(a_j-\mu_a).
 \end{gathered}
 \label{eq:row-attention}
\end{equation}
Sums run over $L$. The score derivative $g$ is one row of $dS$, and the
vector--Jacobian product (VJP) is
\begin{equation}
 G_Q=dq=\alpha\sum_jg_jk_j,\qquad
 G_{K,j}=dk_j=\alpha g_jq,\qquad G_{V,j}=dv_j=p_ju.
 \label{eq:row-gradients}
\end{equation}
Capital $Q,K,V,U$ collect rows, and $dQ,dK,dV$ are full gradient tensors;
key and value contributions also sum over queries and over the query heads that
share a KV head in grouped-query attention (GQA). We measure errors against this
VJP in FP64 on the same BF16 inputs.

The exact score gradient has a conserved quantity: its \emph{row mass}, the sum
of its entries, is zero. This is what makes attention gradients insensitive to
where the keys sit as a group.
\begin{proposition}[Exact translation structure]
\label{prop:main-translation}
At fixed $q,u$, common translations of unmasked keys or values preserve the
local VJP; $\mathbf1^\top g=0$ and
$G_Q=\alpha\sum_jp_j(k_j-\mu_k)(a_j-\mu_a)$.
\end{proposition}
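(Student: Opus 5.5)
The plan is to verify the three claims directly from the definitions in \eqref{eq:row-attention} and \eqref{eq:row-gradients}, in the following order: first the zero row mass, then the centered form of $G_Q$, and finally the translation invariance, which uses the first two.

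\textbf{Row mass.} Summing over $L$ and using $\sum_j p_j=1$ gives
\[
\mathbf1^\top g=\sum_j p_ja_j-\mu_a\sum_jp_j=\mu_a-\mu_a=0 .
\]

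\textbf{Centered query gradient.} Since $\sum_j g_j=0$, any common vector can be subtracted from the keys inside $G_Q$ without changing it:
\[
G_Q=\alpha\sum_jg_j(k_j-\mu_k).
\]
Substituting $g_j=p_j(a_j-\mu_a)$ yields the stated formula directly.

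\textbf{Translation invariance.} Consider a common shift of the unmasked keys, $k_j\mapsto k_j+c$ for all $j\in L$.
\begin{itemize}
\item Each score becomes $s_j+\alpha q^\top c$. Softmax is invariant under a common additive shift, so $p$ is unchanged.
\item Since $p$ and the values are unchanged, $o$, $a$, $\mu_a$ and $g$ are unchanged.
\item $\mu_k$ shifts by $c$, so the differences $k_j-\mu_k$ are unchanged. The centered formula then shows $G_Q$ is unchanged.
\item $G_{K,j}=\alpha g_jq$ and $G_{V,j}=p_ju$ depend only on $g$, $p$, $q$, $u$, so they are unchanged.
\end{itemize}

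Now consider a common shift of the values, $v_j\mapsto v_j+w$.
\begin{itemize}
\item $p$ is untouched.
\item Each $a_j$ becomes $a_j+u^\top w$, and $\mu_a$ becomes $\mu_a+u^\top w$ because $\sum_jp_j=1$. Hence $a_j-\mu_a$ and $g$ are unchanged.
\item Consequently $G_Q$, $G_K$ and $G_V$ are all unchanged, with $q$ and $u$ held fixed as the statement specifies.
\end{itemize}
The output $o$ does shift by $w$, but the statement concerns the local VJP at fixed $u$, so this does not matter.

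\textbf{Main point of care.} The argument is routine throughout. The only thing to watch is the scope of the claim: the translation must apply to all keys in the causal support $L$, since masked keys do not enter the softmax normalization, and invariance is asserted for the local VJP at fixed $q,u$ rather than for downstream quantities.
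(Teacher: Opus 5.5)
Your proposal is correct and follows essentially the same route as the paper. Both verify $\mathbf1^\top g=0$ from $\sum_jp_j=1$, note that a common key shift moves all logits of the row equally (so $p$ is unchanged) and that a common value shift cancels in $a_j-\mu_a$, and both rely on the zero row sum for the query contraction. The only difference is order: the paper uses $\sum_jg_j=0$ directly to show $\alpha\sum_jg_jk_j$ is invariant and states the covariance form last, whereas you derive the covariance form first and use it.
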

The proof is one line: $\sum_jg_j=\sum_jp_ja_j-\mu_a\sum_jp_j=0$, so shifting
every key by $\mu_k$ leaves $\sum_jg_jk_j$ unchanged, which gives the covariance
form (Appendix~\ref{app:leakage-details}). A numerical backward can break this
law in two places: before the score gradient is formed, through the quantities
it is built from, or after, when it is rounded and multiplied with the keys. We
call these channels one and two and examine them in turn.

\begin{figure}[t]
\centering
\includegraphics[width=\linewidth]{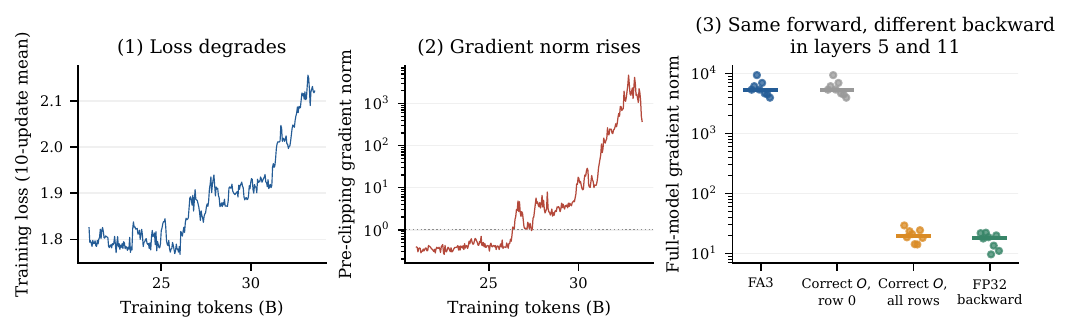}
\caption{The failure and its localization. (1)--(2) Loss and pre-clipping
gradient norm of the FA3 run of Figure~\ref{fig:scratch-progress} (10-update means). (3) Model-gradient norm on eight documents
at 33.6B tokens, with the forward held bitwise fixed and only the backward of
layers 5 and 11 changed: FA3's own; FA3 given a correct saved output (``correct
$O$'', recomputed in FP64 and rounded to BF16) for the first query row or for all
rows (removes channel one, Section~\ref{sec:channel-one}); or FP32 (removes both
channels).}
\label{fig:main-diagnosis}
\end{figure}

\subsection{Channel one: the saved output}
\label{sec:channel-one}
FA3's backward does not recompute $\mu_a=u^\top o$; it estimates it as
$\widehat\delta=u^\top\widehat o$ (the row statistic $D$ in
Algorithm~\ref{alg:gproj-main}) from the BF16 output $\widehat o$ saved by the
forward pass. An error $e_\delta=\widehat\delta-\mu_a$ in this single reduction
shifts the score gradient by $-e_\delta p$ and produces the query-gradient error
\begin{equation}
 E_\delta=-\alpha e_\delta\mu_k,
 \label{eq:main-delta}
\end{equation}
a vector along the attention-weighted mean key. This one-line prediction
matches the measured error almost exactly, and supplying a correct saved output
removes most of it, together with nearly all of the excess model gradient
(Figure~\ref{fig:main-diagnosis}; Appendix~\ref{sec:historical-numerics}). The saved-output error has a known source: a
fused multiply-add in the forward softmax merges scaling and maximum subtraction
into one rounded operation, so the largest score no longer maps exactly to zero
\citep{pytorch2024fma}. Subtracting the unscaled row maximum before scaling, a
one-line change we call FA3-SBS (subtract before scale), removes it and
substantially lowers the gradient errors (Section~\ref{sec:experiments}).

\subsection{Channel two: the cast breaks the conservation law}
\label{sec:channel-two}
Removing channel one repairs the model-gradient norm but not the query
gradient. With a correct saved output the norm returns to its FP32 level, yet
the query gradient is still wrong by more than its own size and poorly aligned
with the exact one (Table~\ref{tab:historical-confirmation}), and this
residual still drives layers 5 and 11 into the large-logit regime. It enters
after the score gradient is formed: FA3 rounds $g$ to a BF16 operand $t$ before
contracting it with the keys, and rounding need not preserve a zero sum. With
$e=t-g$ and $\rho=\sum_jt_j$, the contraction satisfies, before its own rounding,
\begin{equation}
 \alpha\sum_j t_jk_j-G_Q
 =\alpha\sum_j e_j(k_j-\mu_k)+\alpha\rho\mu_k.
 \label{eq:leakage-decomposition}
\end{equation}
The first term is an ordinary rounding error, measured relative to the mean
key. The second is a leak: the row mass times the mean key itself. It is
harmless while keys are small and attention is spread out, and it dominates late
in training. When a row puts almost all of its weight on one key, the true
gradient is a covariance over a nearly one-hot distribution and is close to
zero, whereas $\|\mu_k\|$ is close to the norm of that key, which late training
makes large; a row mass of a single BF16 rounding step then outweighs the signal
by orders of magnitude. Two keys make this explicit. With weights $1-\varepsilon$
and $\varepsilon$,
\begin{equation}
 G_Q=\alpha\,\varepsilon(1-\varepsilon)(a_1-a_2)(k_1-k_2),\qquad
 |\rho|\le 2u_b\,\varepsilon(1-\varepsilon)|a_1-a_2|,
 \label{eq:two-key}
\end{equation}
where $u_b=2^{-8}$ is the BF16 unit roundoff, so the rounding model allows a
leak of up to $2u_b\|\mu_k\|/\|k_1-k_2\|$ relative to the true gradient. In
this bound the sharpness $\varepsilon$ cancels: concentrating attention does not
protect the gradient, while every increase in key norm relative to key
separation enlarges the leak linearly. The four-key example below realizes a
leak of this kind under actual rounding.
Neither FP32
accumulation nor a forward repair can remove the leak: it is already in the BF16
multiplicands, and it arises after the forward pass. The key gradient
$dk_j=\alpha\sum_it_{ij}q_i$ contracts the same operand along the queries and
inherits its row-mass error.

\paragraph{A four-key example.} Take $q=0$, so attention is uniform over four keys
$k_j=(65536,\,j-1)$ that share their first coordinate; the first coordinate of
the true query gradient is therefore exactly zero. With $\alpha=1/8$ and values
chosen so that $g=(1+\eta,-1,-\eta,0)$, where $\eta=3/1024$, every
input and the saved output are exact in BF16. Casting $g$ to BF16 gives
$t=(1,-1,-\eta,0)$: the entry $1+\eta$ rounds to 1, leaving row mass
$\rho=-\eta$. The first coordinate of the query gradient becomes
$\alpha\rho\cdot65536=-24$ instead of 0 (Appendix~\ref{app:leakage-details}
lists the values).

\section{GProj}
\label{sec:method}
To remove the row-mass leak of the BF16 score-gradient cast
(Section~\ref{sec:channel-two}), GProj restores the zero row sum after the
cast, on the operands the kernel actually multiplies. It removes the row mass $\rho$ that the cast leaves in $t$ by subtracting a
multiple of the BF16 probabilities $r$, an operand the kernel already forms for
$dV$, whose actual mass is $m=\sum_jr_j$. Subtracting $\lambda r$ leaves row mass
$\rho-\lambda m$, so cancellation requires $\lambda=\rho/m$; using $\rho$ alone
leaves $\rho(1-m)$, because BF16 probabilities need not sum to one. The result,
$t^\perp=t-\lambda r$, is the projection of $t$ onto the zero-sum subspace along
$r$. It never has to be formed, because both contractions are linear in it:
\begin{equation*}
 dQ=\alpha\,t^\perp K=\alpha\big(tK-\lambda\odot(rK)\big),\qquad
 dK=\alpha\,{t^\perp}^{\!\top}Q=\alpha\big(t^\top Q-r^\top(\lambda\odot Q)\big).
\end{equation*}
FA3 already computes $tK$ and $t^\top Q$, so GProj adds, per gradient, one
product ($rK$, and $r^\top(\lambda\odot Q)$ in a second pass) and one rank-one
term per query row, plus two row sums, keeping BF16 products, FP32
accumulators, and the existing saved BF16 output and FP32 LSE. The query
correction fits in FA3's own backward pass. The key correction needs a second
pass: $\lambda$ is defined per query row and is known only after that row has
seen every key, whereas $dK$ is accumulated per key tile over all query rows.
GProj keeps the subtract-before-scale forward of FA3-SBS. Algorithm~\ref{alg:gproj-main}
shows where GProj enters FA3's backward, and kernel details
are in Appendix~\ref{app:implemented-arithmetic}.

\begingroup
\footnotesize
\begin{algorithm}[t]
\caption{FA3 backward pass (black) with the GProj additions (\gpnew{orange})}
\label{alg:gproj-main}
\centering
\begin{tabular}{@{}r@{\hspace{.7em}}p{\dimexpr\linewidth-2.4em\relax}@{}}
\toprule
& \textbf{Input:} BF16 $Q,K,V,O,dO\in\R^{N\times d}$; FP32 $\mathrm{LSE}\in\R^N$; scale $\alpha$; key tiles $K_j,V_j$, query tiles $Q_i,O_i,dO_i$.\\
& \textbf{Output:} BF16 $dQ,dK,dV$. Accumulators and row statistics are FP32.\\
\midrule
1 & $D\gets\mathrm{rowsum}(dO\odot O)$;\quad $dQ^{\rm acc}\gets0$;\quad \gpnew{$B^{\rm acc}\gets0$,\ $\rho\gets0$,\ $m\gets0$}\\
2 & \textbf{for} each key tile $j$ \textbf{in parallel do}\\
3 & \quad Load $K_j,V_j$;\quad $dK^{\rm acc}_j\gets0$,\ $dV_j\gets0$\\
4 & \quad \textbf{for} each query tile $i$ that attends to tile $j$ \textbf{do}\\
5 & \qquad Load $Q_i,dO_i,\mathrm{LSE}_i,D_i$;\quad $S\gets\alpha\,Q_iK_j^\top$;\quad $P\gets\exp(S-\mathrm{LSE}_i)$ (masked)\\
6 & \qquad $dV_j\mathrel{+}=\mathrm{BF16}(P)^\top dO_i$;\quad $dP\gets dO_iV_j^\top$\\
7 & \qquad $t\gets\mathrm{BF16}\big(P\odot(dP-D_i)\big)$\\
8 & \qquad \gpnew{$r\gets\mathrm{BF16}(P)$;\quad $\rho_i\mathrel{+}=\mathrm{rowsum}(t)$;\quad $m_i\mathrel{+}=\mathrm{rowsum}(r)$}\\
9 & \qquad $dQ^{\rm acc}_i\mathrel{+}=t\,K_j$;\quad \gpnew{$B^{\rm acc}_i\mathrel{+}=r\,K_j$}\hfill(atomic adds)\\
10 & \qquad $dK^{\rm acc}_j\mathrel{+}=t^\top Q_i$\\
11 & \quad \textbf{end for}\\
12 & \quad Write $dV_j$;\quad FA3 writes $dK_j\gets\mathrm{BF16}(\alpha\,dK^{\rm acc}_j)$, \gpnew{GProj keeps the FP32 $dK^{\rm acc}_j$ for Pass 2}\\
13 & \textbf{end for}\\
14 & \gpnew{$\lambda\gets\rho/m$ (zero where $m=0$)};\quad $dQ\gets\mathrm{BF16}\big(\alpha(dQ^{\rm acc}\gpnew{\,-\,\lambda\odot B^{\rm acc}})\big)$\\
\midrule
15 & \gpnew{\textbf{Pass 2: for} each key tile $j$ \textbf{in parallel do}\quad $C\gets0$}\\
16 & \quad\gpnew{\textbf{for} each query tile $i$ that attends to tile $j$ \textbf{do}}\\
17 & \qquad\gpnew{Recompute $P$ as in line 5;\quad $\widetilde r\gets\mathrm{BF16}(P)$;\quad $C\mathrel{+}=\widetilde r^\top\,\mathrm{BF16}(\lambda_i\odot Q_i)$}\\
18 & \quad\gpnew{\textbf{end for};\quad $dK_j\gets\mathrm{BF16}\big(\alpha(dK^{\rm acc}_j-C)\big)$}\\
\bottomrule
\end{tabular}
\end{algorithm}
\endgroup

\label{sec:theory}
\label{sec:gproj-projection}
The projection works because the exact score gradient already has zero row
sum: since $\mathbf1^\top g=0$, the projection leaves $g$ unchanged and acts
only on the cast error $e=t-g$, removing its row mass. In the query contraction this
subtracts the $r$-weighted center $\mu_r$ from every key, and a common key
offset cancels exactly. Proposition~\ref{prop:gproj-projection} states this
precisely; proofs for this section are in Appendix~\ref{app:gproj-theory}.
\begin{proposition}[Projection with consistent operand mass]
\label{prop:gproj-projection}
For nonnegative $r$ with $m>0$, define
$\Pi_r=I-r\mathbf1^\top/m$ and $\lambda=\rho/m$.
Then $\Pi_r^2=\Pi_r$, $\mathbf1^\top\Pi_r=0$, and
$t^\perp=\Pi_rt=t-\lambda r$ has zero row sum. With
$\mu_r=\sum_jr_jk_j/m$, its query contraction is
\begin{equation}
 G_Q^{\rm proj}
   =\alpha\left(\sum_jt_jk_j-\frac{\rho}{m}\sum_jr_jk_j\right)
   =\alpha\sum_jt_j(k_j-\mu_r).
 \label{eq:gproj-query}
\end{equation}
For fixed $t,r$, this contraction is unchanged by any common translation
of the keys. If $e=t-g$ for the exact score gradient $g$, then
\begin{equation}
 t^\perp-g=\Pi_re,\qquad
 G_Q^{\rm proj}-G_Q=\alpha\sum_je_j(k_j-\mu_r).
 \label{eq:gproj-centered-error}
\end{equation}
\end{proposition}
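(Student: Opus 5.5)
The proof is direct linear algebra. I would verify the operator identities for $\Pi_r$ first, then the contraction formula, then translation invariance, and finally the error decomposition. Throughout I use $\rho=\mathbf1^\top t$, $m=\mathbf1^\top r>0$ and $\lambda=\rho/m$.

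\textbf{Operator identities.} Compute $\mathbf1^\top\Pi_r=\mathbf1^\top-(\mathbf1^\top r)\mathbf1^\top/m=0$. Idempotence follows from this: $\Pi_r^2=\Pi_r-(r/m)(\mathbf1^\top\Pi_r)=\Pi_r$. For the projected vector, $\Pi_rt=t-r(\mathbf1^\top t)/m=t-\lambda r$, and its row sum is $\mathbf1^\top\Pi_rt=0$. Nonnegativity of $r$ is not needed here. It only guarantees $m>0$ is meaningful, so that $\mu_r$ is a weighted average of the keys.

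\textbf{Query contraction and translation invariance.} By linearity, $\alpha\sum_j(t_j-\lambda r_j)k_j=\alpha\bigl(\sum_jt_jk_j-\tfrac{\rho}{m}\sum_jr_jk_j\bigr)$. Since $\sum_jr_jk_j=m\mu_r$, this equals $\alpha\bigl(\sum_jt_jk_j-\rho\mu_r\bigr)=\alpha\sum_jt_j(k_j-\mu_r)$, which is \eqref{eq:gproj-query}. Now replace every $k_j$ by $k_j+c$ with $t,r$ held fixed. Then $\mu_r$ becomes $\mu_r+\sum_jr_jc/m=\mu_r+c$, so each difference $k_j-\mu_r$ is unchanged, and hence so is the contraction.

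\textbf{Error decomposition.} Take $e=t-g$. Proposition~\ref{prop:main-translation} gives $\mathbf1^\top g=0$, so $\Pi_rg=g$. By linearity, $t^\perp-g=\Pi_rt-\Pi_rg=\Pi_re$. For the query error, apply the same centering to $g$: because $\sum_jg_j=0$, we have $G_Q=\alpha\sum_jg_jk_j=\alpha\sum_jg_j(k_j-\mu_r)$. Subtracting this from \eqref{eq:gproj-query} gives $G_Q^{\rm proj}-G_Q=\alpha\sum_je_j(k_j-\mu_r)$. Equivalently, $\alpha\sum_j(\Pi_re)_jk_j$ expands to the same expression.

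\textbf{Where care is needed.} No step is genuinely hard. The one point to watch is that $G_Q$ is centered at $\mu_r$, not at the probability-weighted mean $\mu_k$. This is legitimate for any center, precisely because $g$ has zero row sum. It is also the place where the exact conservation law of Proposition~\ref{prop:main-translation} enters.
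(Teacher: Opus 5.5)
Your proof is correct and follows the paper's argument. The paper also checks $\Pi_r^2=\Pi_r$ and $\mathbf1^\top\Pi_r=0$ directly, uses $\mathbf1^\top g=0$ to get $t^\perp-g=\Pi_re$, and expands $\alpha\sum_j(\Pi_re)_jk_j$ to $\alpha\sum_je_j(k_j-\mu_r)$, with translation invariance coming from the same centered form. Your small variations are equivalent: you derive idempotence from $\mathbf1^\top\Pi_r=0$, and you obtain the error by subtracting the $\mu_r$-centered forms of $G_Q^{\rm proj}$ and $G_Q$. Your remark that the algebra needs only $m\neq0$ is also accurate.
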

It is also the smallest zero-sum correction in the $r$-weighted norm:
$t-\lambda r$ minimizes $\sum_j(z_j-t_j)^2/r_j$ over all $z$ with $\mathbf1^\top z=0$.

The remaining error is therefore bounded by the spread of the keys, not by
their offset; a backward that contracts $t$ directly has no such bound. To compare
the two on equal terms, give every method a perfect forward: exact
probabilities $p$ and the exact center $\mu_a$, so the pre-cast score operand is
exactly $x=g$. Only the BF16 cast remains, $t_j=x_j+\xi_j$ with
$|\xi_j|\le u_b|x_j|$ for the unit roundoff $u_b=2^{-8}$. Write
$\sigma_k^2=\sum_jp_j\|k_j-\mu_k\|^2$ and $\sigma_a^2=\sum_jp_j(a_j-\mu_a)^2$
for the attention-weighted spread of keys and of $a$.

\begin{theorem}[Forward repair versus projection]
\label{thm:fwd-vs-proj}
Under these assumptions, with exact arithmetic after the cast:
\begin{enumerate}
\item[(a)] A backward that contracts $t$ directly, as FA3-SBS, FP32 saved
outputs and dynamic shifting do, has query error
$E_{\rm fwd}=\alpha\sum_j\xi_j(k_j-\mu_k)+\alpha\big(\sum_j\xi_j\big)\mu_k$.
Over cast errors allowed by the rounding model,
\[
 \sup_\xi\|E_{\rm fwd}\|\;\ge\;|\alpha|\,u_b\Big(\|\mu_k\|\sum_j|g_j|-\sigma_k\sigma_a\Big).
\]
Translating all keys by $b$ leaves $G_Q$ unchanged but replaces $\mu_k$ by
$\mu_k+b$, so for any fixed $G_Q\neq0$ the worst-case relative error is
unbounded.
\item[(b)] GProj with $r=p$ has query error
$E_{\rm proj}=\alpha\sum_j\xi_j(k_j-\mu_k)$ and
$\|E_{\rm proj}\|\le|\alpha|\,u_b\,\sigma_k\sigma_a$ for every admissible cast;
the bound does not depend on $\mu_k$ and is invariant under key translation.
\end{enumerate}
\end{theorem}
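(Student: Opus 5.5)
Both parts rest on the exact identities already available. Under a perfect forward the pre-cast operand is $x=g$, so $t=g+\xi$ and $e=\xi$ in the notation of \eqref{eq:leakage-decomposition}. For (a), the error formula is \eqref{eq:leakage-decomposition} with $e=\xi$ and $\rho=\sum_j t_j=\sum_j\xi_j$, since $\mathbf1^\top g=0$ by Proposition~\ref{prop:main-translation}. For (b), with $r=p$ we have $m=1$ and $\mu_r=\mu_k$. The error formula is then exactly \eqref{eq:gproj-centered-error} of Proposition~\ref{prop:gproj-projection}, namely $E_{\rm proj}=\alpha\sum_j\xi_j(k_j-\mu_k)$. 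Translation invariance of this error is immediate, because $k_j-\mu_k$ does not change when every key is shifted by $b$.

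For the bound in (b), I use $|\xi_j|\le u_b|g_j|=u_b\,p_j|a_j-\mu_a|$ and the triangle inequality to get
\[
\|E_{\rm proj}\|\le|\alpha|u_b\sum_j p_j|a_j-\mu_a|\,\|k_j-\mu_k\|.
\]
Cauchy--Schwarz with weights $p_j$ then bounds the sum by $\sigma_a\sigma_k$. This bound depends only on spreads, so it is independent of $\mu_k$.

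For the lower bound in (a), I choose an admissible cast that aligns the leak. Set $\xi_j=u_b|g_j|$ for every $j$, which is admissible since $|\xi_j|\le u_b|x_j|$ only constrains magnitudes. Then $\sum_j\xi_j=u_b\sum_j|g_j|$, and the reverse triangle inequality gives
\[
\|E_{\rm fwd}\|\ge|\alpha|\Big(\big|\textstyle\sum_j\xi_j\big|\,\|\mu_k\|-\big\|\textstyle\sum_j\xi_j(k_j-\mu_k)\big\|\Big).
\]
The second term is at most $|\alpha|u_b\sigma_k\sigma_a$ by the same Cauchy--Schwarz step as in (b). Together these give the stated bound. This choice of $\xi$ is the only point needing care. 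One might worry that the sign pattern is constrained, for example that rounding must produce $t_j=0$ when $g_j=0$; but $g_j=0$ forces $\xi_j=0$ under the model anyway, so the choice stays consistent.

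For unboundedness, I translate $k_j\mapsto k_j+b$. This preserves $G_Q$ by Proposition~\ref{prop:main-translation}, and it also preserves $g$ and $\sigma_k$, because $p$, $a$ and $\mu_a$ depend only on $q$, $u$ and the values, which are held fixed. Only $\mu_k$ changes, to $\mu_k+b$. Taking $\|b\|\to\infty$ makes the lower bound grow linearly while $\|G_Q\|$ stays fixed. The one degenerate case is $\sum_j|g_j|=0$, which would make the bound trivial. But $G_Q\neq0$ forces $g\neq0$, so $\sum_j|g_j|>0$, and the worst-case ratio $\sup_\xi\|E_{\rm fwd}\|/\|G_Q\|$ diverges. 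I expect no real obstacle: the main care is stating the admissible cast set precisely so that the sup in (a) ranges over it, and noting that "exact arithmetic after the cast" lets us ignore the rounding of the contraction itself.
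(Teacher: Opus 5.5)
Your proposal is correct and follows the paper's proof essentially step for step: the same decomposition with $e=\xi$, the same aligned cast $\xi_j=u_b|g_j|$ with the reverse triangle inequality, the same weighted Cauchy--Schwarz bound by $\sigma_k\sigma_a$, and the same key-translation argument. The paper obtains (b) from Lemma~\ref{lem:gproj-centered-cast} with $\beta=\mu_a$ rather than from Proposition~\ref{prop:gproj-projection} with $r=p$, which is equivalent. One slip in the unboundedness step: $p$ does depend on the keys, and it is unchanged under $k_j\mapsto k_j+b$ only because every logit shifts by the same $\alpha q^\top b$, which softmax ignores.
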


The two bounds differ by the factor $\|\mu_k\|\sum_j|g_j|/(\sigma_k\sigma_a)$:
how far the keys sit from the origin relative to their spread ($2\|\mu_k\|/\|k_1-k_2\|$
for two keys, at any split), which late training makes large; Figure~\ref{fig:teaser} (panel 2)
measures this growth on synthetic rows. This model also leaves out one effect of
concentration. In the implemented kernel, the operand is formed
from a BF16 saved output and reconstructed probabilities, so its row mass
carries an error of order $u_b$ that does not shrink with the remaining mass,
whereas the true gradient does (Section~\ref{sec:channel-one}). GProj removes
this row mass whatever its source. Appendix~\ref{app:gproj-theory} gives the proof
and the additional terms of the implemented kernel, where $r$ is itself a BF16
operand and the center is computed from a saved output.

\section{Results}
\label{sec:experiments}
\subsection{Accuracy with an identical forward}
GProj reduces median query/key gradient error from 219\%/13\% after
forward repair to 0.34\%/0.37\% (Table~\ref{tab:gproj-accuracy}). We evaluate
eight attention inputs captured from layers 5 and 11 of the from-scratch FA3
run, and report the median relative $L_2$ error against an FP64 reference on
the same BF16 inputs; on operands from the GProj and FP32 runs, every kernel
is at the BF16 floor (Table~\ref{tab:regime-audit}).

\textbf{FA3} is the unmodified BF16 kernel, \textbf{FA3-SBS} repairs its
forward, \textbf{GProj-Q} adds only the query correction, and \textbf{GProj}
adds both corrections. The last three produce byte-identical forward outputs,
so their comparison isolates the backward. \textbf{FP32 attention} keeps BF16
inputs and outputs but computes attention in FP32. Reference and evaluation
details are in Appendix~\ref{app:gproj-experiments}.

\GProjAccuracyTable

The query correction alone brings $dQ$ to the level of FP32 attention, and the
key pass does the same for $dK$.
GProj does not correct $dV$, whose error is that of FA3-SBS.

\textbf{QY-shift}, the released Qiu--Yao implementation \citep{qiu2026lowprecision}, is less accurate than FA3 on these captures (Appendix~\ref{app:comparison-methods}).

\begin{figure}[t]
\centering
\includegraphics[width=\linewidth]{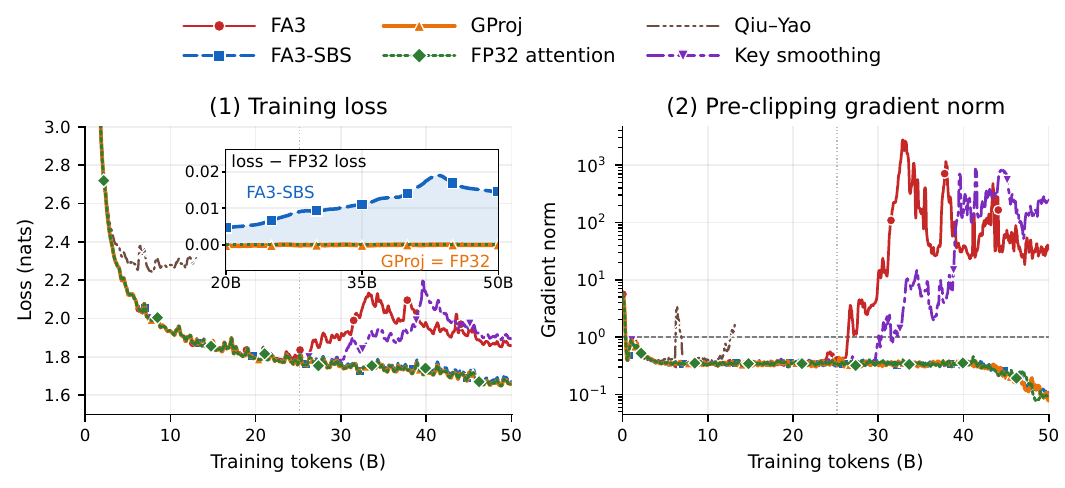}
\caption{Matched from-scratch training of a 450M model with six attention
variants over 50.0B tokens; the Qiu--Yao run was stopped at 13.2B tokens
(Appendix~\ref{app:training-status}). (1) Training loss; the inset shows the loss
of FA3-SBS and GProj minus that of FP32 attention (trailing means over 2.1B tokens):
GProj coincides with FP32, while FA3-SBS drifts 0.01--0.02 nats above it; (2) pre-clipping gradient norm on a log scale, with a
dashed reference at 1. Curves use a light trailing average
over 0.21B tokens; early loss above 3.0 is clipped, and the dotted line at
25.2B tokens marks the approximate onset of FA3's gradient growth.}
\label{fig:scratch-progress}
\end{figure}

\subsection{Translation invariance and robustness}
GProj removes the leak, and each correction changes only the gradient it
targets: the query correction leaves $O,dK,dV$ and the key correction leaves
$O,dQ,dV$ byte-identical to FA3-SBS on every capture and layout tested. In a translation test, one query coordinate is zero and the matching
coordinate of every key is shifted by a common value, so the exact gradient in
that coordinate is zero. GProj returns essentially zero there, whereas FA3 and
FA3-SBS return errors of order one, and on the four-key example of
Section~\ref{sec:channel-two} GProj returns exactly zero.

\begingroup\let\SynOrigCaption\caption
\renewcommand{\caption}[1]{\SynOrigCaption{Synthetic suite: relative $L_2$ error percentages,
median / maximum over the 58 test cases, excluding zero targets separately
for each gradient ($dQ/dK/dV$: 53/51/57 nonzero targets); the two analytic
calibrations are reported in Appendix~\ref{app:synthetic-suite}.}}%
\begin{table}[t]
\centering\small
\caption{Synthetic suite: relative $L_2$ error percentages,
median / maximum over the 58 test cases, excluding zero targets separately
for each gradient ($dQ/dK/dV$: 53/51/57 nonzero targets). The two known
analytic calibrations are excluded from these aggregates but included in
the adverse-result reporting (Appendix~\ref{app:synthetic-suite}).}
\label{tab:synthetic-suite}
\begin{tabular}{lrrr}
\toprule
Method & $dQ$ & $dK$ & $dV$ \\
\midrule
FA3 & 0.2601 / 4794.33 & 0.2537 / 4.1051 & 0.2239 / 0.7657 \\
FA3-SBS & 0.2601 / 4794.53 & 0.2535 / 4.1050 & 0.2239 / 0.7657 \\
GProj & 0.2238 / 2.4938 & 0.2219 / 2.7565 & 0.2239 / 0.7657 \\
\bottomrule
\end{tabular}
\end{table}
\endgroup

Beyond the captures, GProj passes all 60 cases of a synthetic suite covering
common-key shifts, packed, unequal, singleton and tile-boundary layouts, and
long sequences (Table~\ref{tab:synthetic-suite}; Appendix~\ref{app:synthetic-suite}).
Its $dK/dV$ error never rises relative to FA3-SBS; under random common-key translations its error in
the zero-target coordinate is four orders of magnitude below that of FA3-SBS.

\subsection{From-scratch training and cost}
\begin{table}[!t]
\centering\small\setlength{\tabcolsep}{2.5pt}
\caption{Regime statistics at matched checkpoints, measured with exact FP32
attention on two fixed documents. Errors are relative $L_2$ errors of $dQ$
against an FP64 reference on the same BF16 operands, given as ratios rather
than percentages (median of the two documents' means; values above 1 mean the
error exceeds the gradient). With
accurate gradients (GProj, FP32) the layers stay small and FA3's error equals
cuDNN's BF16 floor.}
\label{tab:regime-audit}
\begin{tabular}{lccccc}
\toprule
Run (tokens) & \shortstack{Layers with\\logit scale $>10^3$} & \shortstack{Logit scale\\L5 / L11} & \shortstack{Layer 11\\rms $Q$/$K$} & \shortstack{Layer 11 rows\\$p_{\max}>0.999$} & \shortstack{Layer 11 $dQ$ error\\FA3 / cuDNN}\\
\midrule
FA3 (23.1B) & 5, 11 & 5,436 / 5,519 & 71 / 78 & 86\% & 4.5 / 2.2\\
FA3 (33.6B) & 5, 11 & 12,843 / 51,057 & 213 / 240 & 99.4\% & 253 / 5.6\\
FA3-SBS (33.6B) & 4, 5, 11 & 39,205 / 49,158 & 216 / 227 & 99.6\% & 85 / 2.5\\
Key smoothing (33.6B) & 5, 11 & 10,905 / 17,272 & 124 / 139 & 74\% & 30 / 1.1\\
GProj (33.6B) & none (max 26) & 8 / 7 & 2 / 3 & 0\% & 0.12 / 0.12\\
FP32 attention (33.6B) & none (max 27) & 7 / 7 & 2 / 3 & 0\% & 0.13 / 0.13\\
\bottomrule
\end{tabular}
\end{table}

\GProjTrainingTable

GProj, FP32 attention and FA3-SBS finish the full
50.0B-token schedule with small gradients, while FA3 and key smoothing finish
with higher loss and large gradients (Figure~\ref{fig:scratch-progress};
Table~\ref{tab:scratch-progress}). The released Qiu--Yao implementation stalls
from the first few billion tokens, as its shift rule underflows
(Appendix~\ref{app:training-status}).
GProj ends at the same loss as FP32 attention (1.665), while FA3 ends 0.2 nats
higher. FA3-SBS finishes stably but not cleanly: its backward error still drives
several layers into the large-logit regime, which GProj and FP32 attention
avoid (Table~\ref{tab:regime-audit}).
Key smoothing delays the onset by about 3B tokens but does not prevent it,
consistent with shrinking $\mu_k$ but not $\rho$.

This stability comes at little cost: GProj adds 4.7\% to a complete training
step, against 33.4\% for fused FP32 attention, and saves no extra forward state
(Table~\ref{tab:gproj-training}; Appendix~\ref{app:gproj-experiments}).

\section{Limitations}
\label{sec:limitations}
Our study covers BF16 FlashAttention-3 on Hopper GPUs with head dimension 64
and a 450M-parameter model. The 4.7\% overhead is measured at 4096 tokens;
the GProj backward costs more than FA3's, so the overhead grows with
attention's share of the step. GProj leaves the value gradient to the native
kernel.

\section{Conclusion}
\label{sec:conclusion}
A zero-sum conservation law ties attention's exact gradient to key-translation
invariance, and the BF16 cast breaks it. GProj restores the law after the cast
with two rank-one corrections per row, bringing query/key gradient errors to the level of
FP32 attention, and trains stably where FA3 degrades. Checking exact identities on
the operands a kernel actually multiplies offers a practical way to audit and
repair other low-precision kernels.

\label{main-text-end}
\bibliography{references}
\bibliographystyle{iclr2027_conference}
\clearpage
\appendix
\paragraph{Appendix roadmap.}
Appendices~\ref{app:leakage-details}--\ref{app:gproj-theory} give the
cast witness, implemented arithmetic and proofs; Appendix~\ref{app:comparison-methods}
defines the comparison methods; Appendix~\ref{app:gproj-experiments}
gives the accuracy and cost protocols; Appendix~\ref{app:synthetic-suite}
the synthetic suite; Appendix~\ref{app:training-status} the training runs
and the regime audit. The remaining appendices give the model configuration,
the full localization of the FA3 failure (saved-output channel, fixed-forward
interventions, source intervention, saved-state crosses, backend comparison)
and a plain-Transformer control.
\section{Exact Translation Structure and the Cast Witness}
\label{app:leakage-details}

We use
\emph{gauge} for a common translation of keys or values that leaves the exact
local VJP unchanged at fixed incoming $u$, and the row notation of
Section~\ref{sec:setting} throughout.

\subsection{Exact translations and numerical row-mass leakage}

\begin{proof}[Proof of Proposition~\ref{prop:main-translation}]
For a common key translation $k_j\mapsto k_j+b$, all logits in one row
change by the same $\alpha q^\top b$ and the exact softmax probabilities
are unchanged. For a common value translation $v_j\mapsto v_j+c$, every
$a_j$ changes by $u^\top c$, which cancels in $a_j-\sum_\ell p_\ell a_\ell$.
Thus $g$, and the contributions $\alpha\sum_jg_jk_j$, $\alpha g_jq$ and
$p_ju$, are unchanged because $\sum_jg_j=0$. Expanding the centered
covariance gives $G_Q=\alpha\sum_jp_j(k_j-\mu_k)(a_j-\mu_a)$.
\end{proof}

These identities also underlie key smoothing in SageBwd
\citep[Section~6]{zhang2026sagebwd}; the value translation changes the forward
output by $c$, so $u$ must be held fixed. They single out the direction, the
common key offset, to which a numerical backward can respond spuriously. Key
smoothing removes a common mean from the keys before quantization, which changes
the key operand but not the row sum of the score gradient formed afterwards.
GProj instead corrects the cast score operand itself.

\paragraph{Row-mass leakage.}

\begin{proposition}[Exact leakage decomposition]
\label{prop:cast-mass-leakage}
Let $t$ be any numerical score-gradient operand, let $e=t-g$, and let
$\rho=\sum_jt_j$. Before any additional contraction rounding,
Equation~\eqref{eq:leakage-decomposition} gives the decomposition.
For fixed $t$, a common-key translation $b$ changes its contraction by exactly
$\alpha\rho b$. More generally, subtracting $\lambda r$ from $t$, where
$m=\sum_jr_j\ne0$, leaves row mass $\rho-\lambda m$. Thus $\lambda=\rho/m$
eliminates this mass in real arithmetic, whereas $\lambda=\rho$ leaves
$\rho(1-m)$.
\end{proposition}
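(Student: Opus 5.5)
The statement is a sequence of elementary linear identities, so I will verify each claim directly from the definitions in Equation~\eqref{eq:row-attention}, using only $\mathbf1^\top g=0$ from Proposition~\ref{prop:main-translation}.

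\textbf{Leakage decomposition.} Write $t=g+e$ and split the contraction as
\[
\alpha\sum_j t_jk_j=\alpha\sum_j g_jk_j+\alpha\sum_j e_jk_j .
\]
The first term is $G_Q$. In the second, insert $k_j=(k_j-\mu_k)+\mu_k$. This gives $\alpha\sum_j e_j(k_j-\mu_k)+\alpha\big(\sum_j e_j\big)\mu_k$. Because $\mathbf1^\top g=0$, we have $\sum_j e_j=\sum_j t_j-\sum_j g_j=\rho$. This yields Equation~\eqref{eq:leakage-decomposition} exactly, in real arithmetic and before any contraction rounding. The only point to state carefully is the hypothesis: "before additional contraction rounding" means we treat $t$ as given and evaluate the contraction exactly.

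\textbf{Translation sensitivity.} For fixed $t$, replace every $k_j$ by $k_j+b$. The contraction changes by $\alpha\sum_j t_j b=\alpha\rho b$. I will remark why $t$ is held fixed here: in a real kernel $t$ may itself change under translation through rounding, and the claim deliberately isolates the multiplication step.

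\textbf{Mass of the corrected operand.} Since the row sum is linear, $\mathbf1^\top(t-\lambda r)=\rho-\lambda m$. This vanishes if and only if $\lambda=\rho/m$, which requires $m\neq0$. If instead $\lambda=\rho$, the residual mass is $\rho-\rho m=\rho(1-m)$, which is generally nonzero because BF16 probabilities need not sum to one.

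None of these steps poses a real obstacle. The only care needed is bookkeeping: keeping the exact, pre-rounding semantics distinct from the implemented arithmetic, so that the result is stated purely in real arithmetic on the given operand $t$.
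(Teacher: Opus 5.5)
Your proposal is correct and follows the paper's proof essentially line for line: you use $\mathbf1^\top g=0$ to get $\sum_j e_j=\rho$, add and subtract $\mu_k$ in $\alpha\sum_j e_jk_j$, and read off both the translation change $\alpha\rho b$ and the residual mass $\rho-\lambda m$ from linearity. Your explicit split $\alpha\sum_j t_jk_j=G_Q+\alpha\sum_j e_jk_j$ only spells out a step the paper leaves implicit.
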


\begin{proof}
Since $\sum_jg_j=0$, $\rho=\sum_jt_j=\sum_je_j$. Adding and subtracting
$\mu_k$ inside $\alpha\sum_je_jk_j$ gives
$\alpha\sum_je_jk_j=\alpha\sum_je_j(k_j-\mu_k)+\alpha\rho\mu_k$, which is
Equation~\eqref{eq:leakage-decomposition}. For fixed $t$, translating every key
by $b$ adds $\alpha\rho b$ to the contraction, and subtracting $\lambda r$ from
$t$ leaves row sum $\rho-\lambda m$.
\end{proof}

The last statement is why GProj divides by the actual mass $m$ of the BF16
probabilities it multiplies: the mass of any other probability representation
does not cancel $\rho$.

\paragraph{An exact $-24$ witness.}
A four-key row with an exact forward exhibits the leak. Take four equally likely keys, $\alpha=1/8$, $q=0$, $u=(1,1)$,
$\eta=3/1024$, and
\begin{equation}
 k_j=(65536,j-1),\qquad
 (v_1,v_2,v_3,v_4)=((4,4\eta),(-4,0),(-4\eta,0),(0,0)).
 \label{eq:minus24-witness}
\end{equation}
Every input is exactly BF16-representable. Here $p_j=1/4$ and
$a=(4+4\eta,-4,-4\eta,0)$ has mean zero, so the output is
$o=(-\eta,\eta)$, $u^\top o=0$ exactly, and
$g=(1+\eta,-1,-\eta,0)$. BF16 rounds $1+3/1024$ to one and represents
the other three entries exactly, so $t=(1,-1,-\eta,0)$ with
$\rho=-\eta$. The true first coordinate of
$G_Q$ is zero; contracting the cast operand gives
$-(1/8)\,65536\,(3/1024)=-24$. An exact saved output does not repair this
cast channel. Zero padding embeds the construction in $d=d_v=64$.

\section{Implemented Arithmetic and Rounding}
\label{app:implemented-arithmetic}
The GProj kernel targets Hopper GPUs
with BF16, head dimension 64 and GQA, in dense and packed causal layouts. Matrix
operands, outputs and returned gradients are BF16; accumulators, row statistics
and scalar corrections are FP32. GProj consists of the native backward with a
query tile of 96, followed by a separate $dK$ correction pass.

The projection acts on the BF16 operands after they are formed. Let $\widehat p$ denote the probabilities
reconstructed by the native backward and let $\widehat a_j$ and
$\widehat\delta$ denote its computed value product and saved-output
reduction. The operands consumed by its BF16 matrix multiplications are
\begin{equation}
 r_j=\operatorname{cast}_{\rm BF16}(\widehat p_j),\qquad
 t_j=\operatorname{cast}_{\rm BF16}
       \bigl(\operatorname{fl}_{32}
       [\widehat p_j(\widehat a_j-\widehat\delta)]\bigr),\qquad
 m=\sum_jr_j,\quad \rho=\sum_jt_j.
 \label{eq:gproj-row-mass}
\end{equation}
Here $\operatorname{fl}_{32}$ evaluates the bracketed operations in FP32, and
$m,\rho$ are exact sums of the formed operands, which the kernel approximates in
FP32. The attention scale $\alpha$ is applied later, and in general $m\ne1$ and
$r$ equals neither $p$ nor $\widehat p$.

\paragraph{Two passes.}
The first pass forms $t$ and $r$, measures their row masses, accumulates $tK$
and $rK$, and writes the corrected $dQ$ while keeping the FP32 key accumulator.
The second pass revisits the query/key tiles, reconstructs the probabilities and
corrects that accumulator before $dK$ is cast to BF16.
Algorithm~\ref{alg:gproj-main} gives the tiled pseudocode; the equations below
fix the rounding and the placement of the scale.

\paragraph{Forward and saved state.}
For computed unscaled logits $x_{ij}$ and their row maximum $x_{i,\max}$,
the FA3-SBS forward explicitly rounds the subtraction:
\begin{equation}
 z_{ij}=\operatorname{exp2}_{32}\!\left(
 \operatorname{fl}_{32}[\operatorname{sub}_{32}(x_{ij},x_{i,\max})
 (\alpha\log_2 e)]\right).
 \label{eq:gproj-forward}
\end{equation}
Everything else in the forward is native, and it saves BF16 $O$ and the FP32
log-sum-exp (LSE) as usual.

\paragraph{Query correction.}
After the native preprocessing computes $\widehat\delta_i$ from the saved $O_i$
and $U_i$, the backward forms $t$ and $r$ as in
Equation~\eqref{eq:gproj-row-mass}, measures their masses and accumulates two
contractions:
\begin{equation}
 \widehat A_i=\operatorname{acc}_{32}(t_iK),\quad
 \widehat B_i=\operatorname{acc}_{32}(r_iK),\quad
 \widehat\lambda_i^Q=\operatorname{div}_Q(\widehat\rho_i,\widehat m_i),
 \label{eq:gproj-accumulators}
\end{equation}
where $\operatorname{acc}_{32}$ is a BF16 matrix product accumulated in FP32,
hats denote computed quantities and $\operatorname{div}_Q$ is the compiled FP32
division. After the last key tile,
\begin{equation}
 \widehat{dq}_i=\operatorname{cast}_{\rm BF16}
 \left(\operatorname{fl}_{32}[\alpha(\widehat A_i-
 \widehat\lambda_i^Q\widehat B_i)]\right).
 \label{eq:gproj-query-implemented}
\end{equation}
The scale is applied once, after the correction, and a row with zero computed
mass receives no correction.

\paragraph{Key correction.}
The same projected operand gives the ideal key gradient
$G_K^{\rm proj}=\alpha[t^\top Q-r^\top(\lambda\odot Q)]$, where
$\lambda\odot Q$ scales each query row. Since $\lambda$ is known only after a
row has seen every key, the key correction needs its own pass over the tiles.
It reuses the stored row statistics but divides again, in Triton FP32
arithmetic with the same zero-mass rule, so
$\widehat\lambda_i^K=\operatorname{div}_K(\widehat\rho_i,\widehat m_i)$ need not
match $\widehat\lambda_i^Q$ bitwise. It computes
\begin{equation}
 \widetilde r_{ij}=\operatorname{cast}_{\rm BF16}(\widetilde p_{ij}),\quad
 z_i=\operatorname{cast}_{\rm BF16}(\widehat\lambda_i^Kq_i),\quad
 \widehat{dK}=\operatorname{cast}_{\rm BF16}
 \!\left(\operatorname{fl}_{32}[\widehat A_K-
 \alpha\operatorname{acc}_{32}(\widetilde r^\top z)]\right),
 \label{eq:gproj-key-implemented}
\end{equation}
where $\widehat A_K$ is the native key accumulator, already scaled and kept in
FP32, so the correction happens before the final BF16 cast. Because the
reconstructed $\widetilde r$ can differ from $r$ and casting
$\widehat\lambda_i^Kq_i$ adds error, $dQ$ and $dK$ approximate the same ideal
projection without sharing a bitwise identical score operand. The key pass
leaves $O$, $dQ$ and $dV$ untouched.

\paragraph{Cost.}
Relative to the native backward, GProj adds one $rK$ contraction, a
query-shaped FP32 workspace, two FP32 row-statistic arrays and the second pass. It saves
no extra forward state and stores no quadratic matrix; $dV$ is not projected.

\section{Gauge Projection: Proofs}
\label{app:gproj-theory}
\label{app:theory-proofs}
All statements concern exact attention at fixed represented operands and
a fixed incoming derivative.

\subsection{Projection and the error that remains}
With $m>0$, $\Pi_r=I-r\mathbf1^\top/m$ obeys
\[
 \Pi_r^2=I-2r\mathbf1^\top/m
       +r(\mathbf1^\top r)\mathbf1^\top/m^2=\Pi_r,
 \qquad \mathbf1^\top\Pi_r=0.
\]
It acts as the identity on vectors of zero row sum. Since $g$ is such a
vector, $\Pi_rt-g=\Pi_r(t-g)$, and contracting the latter gives
\[
 \alpha\sum_j(\Pi_re)_jk_j
 =\alpha\sum_je_jk_j-\alpha\frac{\sum_je_j}{m}\sum_jr_jk_j
 =\alpha\sum_je_j(k_j-\mu_r).
\]
The same algebra gives translation invariance at fixed $t,r$ and completes the
proof of Proposition~\ref{prop:gproj-projection}. The projection removes only
the row mass of the error; an error that already sums to zero is left intact.

\paragraph{Centering before the cast.}
The remaining error depends on how the operand is centered before it is cast.

\begin{lemma}[Conditional centered-cast bound]
\label{lem:gproj-centered-cast}
Suppose $p_j>0$, $\sum_jp_j=1$, and the exact pre-cast operand is
$x_j=p_j(a_j-\beta)$. Suppose its cast $t_j=x_j+\xi_j$ satisfies
$|\xi_j|\le u_b|x_j|$. If all subsequent projection and contraction
arithmetic is exact and the correction uses $r=p$, then
\begin{equation}
 \left\|\alpha\sum_j(\Pi_pt)_jk_j-G_Q\right\|_2
 \le |\alpha|u_b\,\sigma_k
       \sqrt{\sigma_a^2+(\mu_a-\beta)^2},
 \label{eq:gproj-cast-bound}
\end{equation}
where $\sigma_k^2=\sum_jp_j\|k_j-\mu_k\|_2^2$ and
$\sigma_a^2=\sum_jp_j(a_j-\mu_a)^2$.
\end{lemma}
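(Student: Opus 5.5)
We reduce the projected error to a weighted sum of the cast errors against the centered keys, then bound that sum with a weighted Cauchy--Schwarz inequality.

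\emph{Step 1: the error formula.} Since the operand $x$ generally does not have zero row sum when $\beta\neq\mu_a$, we do not apply Proposition~\ref{prop:gproj-projection} with $e=t-g$ directly. Instead we write $t=x+\xi$ and use $\Pi_p x$. We have $\sum_jx_j=\mu_a-\beta$, so
\[
\Pi_px=x-(\mu_a-\beta)p=p\odot(a-\mu_a)=g,
\]
using $m=\sum_jp_j=1$. Hence $\Pi_pt-g=\Pi_p\xi$. The projection algebra of Proposition~\ref{prop:gproj-projection}, with $r=p$ so that $\mu_r=\mu_k$, then gives
\[
\alpha\sum_j(\Pi_pt)_jk_j-G_Q=\alpha\sum_j\xi_j(k_j-\mu_k).
\]

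\emph{Step 2: the bound.} We apply the triangle inequality and then Cauchy--Schwarz with weights $p_j>0$:
\[
\Big\|\sum_j\xi_j(k_j-\mu_k)\Big\|
\le\sum_j\sqrt{p_j}\,\|k_j-\mu_k\|\cdot\frac{|\xi_j|}{\sqrt{p_j}}
\le\sigma_k\Big(\sum_j\xi_j^2/p_j\Big)^{1/2}.
\]
Inserting $|\xi_j|\le u_b|x_j|=u_b\,p_j|a_j-\beta|$ gives
\[
\sum_j\xi_j^2/p_j\le u_b^2\sum_jp_j(a_j-\beta)^2.
\]

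\emph{Step 3: the second moment.} The last sum is a second moment about $\beta$ under the probability weights $p$. Expanding $a_j-\beta=(a_j-\mu_a)+(\mu_a-\beta)$, the cross term vanishes because $\sum_jp_j(a_j-\mu_a)=0$. This yields the bias--variance identity
\[
\sum_jp_j(a_j-\beta)^2=\sigma_a^2+(\mu_a-\beta)^2,
\]
and combining the three steps gives \eqref{eq:gproj-cast-bound}.

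The main subtlety is Step 1. One must recognize that projecting along $p$ cancels the offset $\beta$ exactly, because $\sum_jp_j=1$, and therefore the error term involves only $\xi$ and not $e$. Once that is settled, the remaining steps are routine inequalities. Taking $\beta=\mu_a$ recovers part (b) of Theorem~\ref{thm:fwd-vs-proj}.
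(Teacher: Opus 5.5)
Your proof is correct and matches the paper's argument essentially step for step. The paper also notes that $\Pi_p x=p\odot(a-\mu_a)=g$ irrespective of $\beta$, reduces the error to $\alpha\sum_j\xi_j(k_j-\mu_k)$, and finishes with the same weighted Cauchy--Schwarz step and the identity $\sum_jp_j(a_j-\beta)^2=\sigma_a^2+(\mu_a-\beta)^2$. Your caution about $e=t-g$ in Step 1 is harmless but unnecessary: since $\Pi_p(x-g)=0$, applying the proposition with $e=t-g$ gives the same $\Pi_pe=\Pi_p\xi$.
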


\begin{proof}
The exact projection gives
$\Pi_px=p\odot(a-\mu_a)=g$, irrespective of $\beta$. Thus the query
error is $\alpha\sum_j\xi_j(k_j-\mu_k)$. Weighted Cauchy--Schwarz gives
\[
 \Big\|\sum_j\xi_j(k_j-\mu_k)\Big\|_2
 \le\left(\sum_j\frac{\xi_j^2}{p_j}\right)^{1/2}\sigma_k
 \le u_b\left(\sum_jp_j(a_j-\beta)^2\right)^{1/2}\sigma_k.
\]
The final sum is $\sigma_a^2+(\mu_a-\beta)^2$.
\end{proof}

The error grows with $|\mu_a-\beta|$, so an accurate center before the cast
still matters: projection cannot restore value differences lost in the cast.

\subsection{Proof of Theorem~\ref{thm:fwd-vs-proj}}
With a perfect forward, $x=g$ and $e=t-g=\xi$. Part (a): contracting $t$ gives
$\alpha\sum_jt_jk_j-G_Q=\alpha\sum_j\xi_jk_j$, and adding and subtracting
$\mu_k$ yields the stated decomposition, exactly as in
Equation~\eqref{eq:leakage-decomposition}. For the lower bound, choose the
admissible errors $\xi_j=u_b|g_j|$, all of one sign, so that
$\sum_j\xi_j=u_b\sum_j|g_j|$. By the triangle inequality,
\[
 \|E_{\rm fwd}\|\ge|\alpha|\,u_b\|\mu_k\|\sum_j|g_j|
 -|\alpha|\Big\|\sum_j\xi_j(k_j-\mu_k)\Big\|,
\]
and weighted Cauchy--Schwarz bounds the last norm by
$(\sum_j\xi_j^2/p_j)^{1/2}\sigma_k\le u_b(\sum_jp_j(a_j-\mu_a)^2)^{1/2}\sigma_k
=u_b\sigma_a\sigma_k$, using $|g_j|=p_j|a_j-\mu_a|$. A common key translation by
$b$ leaves $p$, $a$, $g$ and $G_Q$ unchanged and replaces $\mu_k$ with
$\mu_k+b$ while $k_j-\mu_k$ is unchanged; taking $\|b\|\to\infty$ along any
direction makes the bound, and hence the worst-case relative error, unbounded.
Part (b) is Lemma~\ref{lem:gproj-centered-cast} with $\beta=\mu_a$: the projection
with $r=p$ maps $x$ to $g$, the error is $\alpha\sum_j\xi_j(k_j-\mu_k)$, and the
same Cauchy--Schwarz step gives $|\alpha|u_b\sigma_k\sigma_a$. Neither $\mu_k$
nor $b$ enters. \qed

Dynamic shifting
\citep{qiu2026lowprecision} and FP32 saved outputs change only how $p$ and $\mu_a$
are formed, so with a perfect forward they fall under part (a). In the
implemented kernel, $r$ is a separate BF16 cast with $m\ne1$, the center comes
from a saved output and the contractions round; the analysis below accounts for
these terms.

\paragraph{Synthetic offset sweep.}
Panel 2 of Figure~\ref{fig:teaser} checks the theorem numerically. Each of 256 rows
has $L=1024$ keys $k_j=z_j+b$ in $d=64$ dimensions, with $z_j\sim\mathcal N(0,I)$ and a
common translation $b$ along a random direction, $\|b\|\in[1,10^4]$. Queries give logits
with standard deviation 3, and values and the upstream derivative are standard normal.
The forward and the score gradient $g$ are exact in FP64; the only rounding is
$t=\mathrm{BF16}(g)$. The forward-repair curve contracts $t$ directly, GProj subtracts
$(\rho/m)\,r$ with $r=\mathrm{BF16}(p)$, and both contractions are evaluated in FP64.
The translation leaves $p$, $g$ and $G_Q$ unchanged; the horizontal axis is the median
of $\|\mu_k\|/\sigma_k$ at each offset. Logit standard deviations of 1 and 6 change the
forward-repair curve by less than 10\% and move the GProj floor between 0.12\% and 0.17\%.

\subsection{Rounding in the implemented query correction}
The kernel adds accumulation and division errors to the ideal projection. To
separate them, define exact sums of the formed operands by
$A=\sum_jt_jk_j$, $B=\sum_jr_jk_j$, $\rho=\sum_jt_j$ and $m=\sum_jr_j>0$.
Write the actual FP32 accumulator/statistic errors as
\[
 \widehat A=A+e_A,\quad \widehat B=B+e_B,\quad
 \widehat\rho=\rho+e_\rho,\quad \widehat m=m+e_m,
 \quad \widehat\lambda^Q=\lambda+e_\lambda^Q,\quad\lambda=\rho/m.
\]
Let $e_F$ include the rounding of the final multiply/subtract, scale and
BF16 conversion. When all quantities in the decomposition are finite, the
implemented query error satisfies
the exact decomposition
\begin{equation}
 \widehat G_Q-G_Q
 =\alpha\sum_j(t_j-g_j)(k_j-\mu_r)
   +\alpha e_A-\alpha\widehat\lambda^Q e_B
   -\alpha e_\lambda^Q B+e_F.
 \label{eq:gproj-rounding-ledger}
\end{equation}
The first term is the operand error that survives the projection; the others
are the accumulation, ratio and final rounding errors, and the norm of the total
is at most the sum of their norms.

If $|e_m|<m$ and
$\widehat\lambda^Q=\widehat\rho/\widehat m+e_{\rm div}^Q$, then
\begin{equation}
 |e_\lambda^Q|\le
 \frac{|e_\rho|+|\lambda|\,|e_m|}{m-|e_m|}+|e_{\rm div}^Q|.
 \label{eq:gproj-ratio-error}
\end{equation}
This follows by subtracting $\rho/m$ from
$(\rho+e_\rho)/(m+e_m)$. The hypothetical coefficients $t-\widehat\lambda^Q r$ have row
mass $-m e_\lambda^Q$ even before the contraction is rounded.
A row with zero
computed mass receives no correction.

\subsection{Rounding in the implemented key correction}
The key pass adds two further sources of error: it reconstructs the
probabilities and casts the scaled queries. Index every query/head row by $i$, and sum only over rows legally attending
to a key in its KV head. The ideal projected key gradient is
\[
 G_{K,j}^{\rm proj}=\alpha\sum_i(t_{ij}-\lambda_i r_{ij})q_i.
\]
It contracts the same projected operand as Equation~\eqref{eq:gproj-query}; GQA
only enlarges the set of contributing query heads. The implemented second pass instead has a reconstructed BF16 probability
$\widetilde r_{ij}$ and scaled-query operand
$z_i=\operatorname{cast}_{\rm BF16}(\widehat\lambda_i^Kq_i)$.
Put $e_{z,i}=z_i-\widehat\lambda_i^Kq_i$ and
$e_{\lambda,i}^K=\widehat\lambda_i^K-\lambda_i$.
The key ratio is recomputed from the shared $\widehat\rho_i,\widehat m_i$;
write $\widehat\lambda_i^K=\widehat\rho_i/\widehat m_i+e_{\rm div,i}^K$.
This division error need not equal the native query division error.
For $|e_{m,i}|<m_i$,
\[
 |e_{\lambda,i}^K|\le
 \frac{|e_{\rho,i}|+|\lambda_i|\,|e_{m,i}|}{m_i-|e_{m,i}|}
 +|e_{\rm div,i}^K|.
\]
Keeping every operand error, the exact product mismatch is
\begin{align}
 \widetilde r_{ij}z_i-r_{ij}\lambda_iq_i
  &=(\widetilde r_{ij}-r_{ij})\lambda_iq_i
    +\widetilde r_{ij}e_{\lambda,i}^Kq_i
    +\widetilde r_{ij}e_{z,i}.
 \label{eq:gproj-key-mismatch}
\end{align}
If $e_{K,j}$ denotes the error of the already scaled native FP32
accumulator relative to $\alpha\sum_it_{ij}q_i$, $e_{C,j}$ denotes the
FP32 correction-contraction error relative to
$\sum_i\widetilde r_{ij}z_i$, and $e_{F,K,j}$ denotes final arithmetic and
output-cast error, then
\[
 \widehat G_{K,j}-G_{K,j}^{\rm proj}
  =e_{K,j}-\alpha e_{C,j}
   -\alpha\sum_i(\widetilde r_{ij}z_i-r_{ij}\lambda_iq_i)
   +e_{F,K,j}.
\]
Recomputation and the extra BF16 cast are why $dQ$ and $dK$ need not share an
identical effective score operand. Correcting the FP32 accumulator, rather than
the rounded BF16 output, avoids one more rounding step. The value gradient is not
projected.

\section{Comparison Methods and Arithmetic Contracts}
\label{app:comparison-methods}
\begin{table}[t]
\centering\small\setlength{\tabcolsep}{4pt}
\begin{tabular}{lp{3.2cm}p{4.0cm}l}
\toprule
Method & Forward & Backward & Saved output\\
\midrule
FA3 & BF16 FA3 & Native mixed precision & BF16\\
FA3-SBS & Subtract-before-scale FA3 & Native mixed precision & BF16\\
Key smoothing & FA3 on mean-subtracted keys & Native mixed precision & BF16\\
QY-shift & Author-source dynamic shift & Author-source BF16 arithmetic & BF16\\
GProj & Subtract-before-scale FA3 & Dense query/key projection & BF16\\
FP32 attention & FP32 attention & FP32 attention & See below\\
\bottomrule
\end{tabular}
\caption{Primary method definitions. All methods expose BF16 output and
gradient tensors to the model. GProj retains BF16 matrix products with FP32
accumulation and corrections. Eager FP32 recomputes its output; fused FP32
saves an internal FP32 output.}
\label{tab:method-definitions}
\end{table}

\paragraph{Methods.}
Table~\ref{tab:method-definitions} summarizes the methods, which differ only in
the attention computation. \textbf{FA3} is unmodified FlashAttention-3.
\textbf{FA3-SBS} changes only its forward softmax, subtracting the row maximum
before scaling (Appendix~\ref{sec:historical-native-patch}). \textbf{GProj} adds
the query and key corrections to the FA3-SBS forward and, like FA3, saves BF16
$O$ and the FP32 LSE. \textbf{FP32 attention} computes attention in FP32 behind
the model's BF16 interface; we use an eager implementation, which recomputes from
$Q,K,V$, and a fused one, which saves an FP32 output.

\textbf{QY-shift} is the dynamic-shift method of Qiu and Yao
\citep[Section~4]{qiu2026lowprecision}, with the arithmetic of their released
BF16 source (\texttt{STABLE=1}). For a current tile of scaled,
masked logits, let $r$ be its row maximum and
$c=\sum_j\mathbf1\{s_j\geq\operatorname{fl}_{\rm BF16}(r-\epsilon)\}$.
The source computes
\begin{equation}
 b=\begin{cases}
 2r,&r>0\ \text{and}\ c>1,\\
 0,&r<0\ \text{and}\ c>1,\\
 r,&\text{otherwise},
 \end{cases}
 \qquad m_{\rm new}=\max(m_{\rm old},b),
 \label{eq:qy-shift}
\end{equation}
with $\epsilon=10^{-3}$. Preserving the rounded subtraction in the count
predicate matters in BF16. The previous numerator and denominator are
rescaled by $\exp(m_{\rm old}-m_{\rm new})$ before the online update.
The implementation retains $512\times512$ tiles, the authors' $10^{-10}$
tile-sum clamp, BF16 intermediate state and gradient accumulation, and
BF16 saved $O$ and LSE.

The source keeps BF16 precision between operations as well, not only at its
interface. Writing $dP=UV^\top$ and $D$ for the computed row reduction
of $U\odot O$, the source rounds that elementwise product before
its row reduction, forms the scaled score operand as
$(P\alpha)\odot(dP-D)$ in BF16, and updates BF16 gradient buffers after
each tile. GProj instead retains FP32 gradient accumulators and applies
$\alpha$ after the query correction.

Our C++/CUDA port matches the author code byte for byte
(Appendix~\ref{app:gproj-experiments}).

\paragraph{Ablations.}
The ablations build GProj up from FA3-SBS, which shares its forward and keeps the
native backward. \emph{GProj-Q} adds the query correction, together with the new
backward tiling, and \emph{GProj} adds the key pass; GProj always denotes the
complete method. Because all three share one forward, their differences are
differences of the backward, and the key pass is measured before and after
correction on the same execution.

\section{Accuracy and Cost: Setup and Additional Results}
\label{app:gproj-experiments}
\paragraph{Captured inputs.}
We evaluate all kernels on eight attention inputs captured from the
from-scratch FA3 run: layers 5 and 11 at 23.1B and 33.6B tokens (checkpoints
5500 and 8000), on one PG19 book and one ProofPile document
(Table~\ref{tab:gproj-capture-map}). Each capture holds BF16 $Q,K,V$ and the
incoming derivative $U$ for one causal sequence of 4096 tokens with 16 query
heads, four KV heads, head dimension 64 and scale $1/8$. Accuracy can therefore
be reproduced from the captures alone, without checkpoints or the trainer.

\paragraph{Reference and metric.}
The reference is the exact attention VJP at the represented BF16 inputs,
computed in FP64 with centered scores and values and with all GQA
contributions summed. We report $100\|\widehat G-G^{64}\|_2/\|G^{64}\|_2$ for each
full gradient tensor, and medians and maxima over the eight captures. The FP32
attention study evaluates the same target with an independent FP64
implementation (explicit softmax Jacobian, 128-query chunks), and QY-shift is
scored against a separately implemented centered FP64 reference.
Table~\ref{tab:gproj-per-capture} lists the errors of every capture.

\GProjPerCaptureTable

\begin{table}[t]
\centering\small
\caption{Identity key for Table~\ref{tab:gproj-per-capture}. Layers are zero-based;
P is a PG19 book and D a ProofPile document.}
\label{tab:gproj-capture-map}
\begin{tabular}{rrrr}
\toprule
Capture & Checkpoint & Layer & Document\\
\midrule
0 & 5500 & 11 & P \\
1 & 5500 & 5 & P \\
2 & 5500 & 11 & D \\
3 & 5500 & 5 & D \\
4 & 8000 & 11 & P \\
5 & 8000 & 5 & P \\
6 & 8000 & 11 & D \\
7 & 8000 & 5 & D \\
\bottomrule
\end{tabular}
\end{table}

\paragraph{Backward-only comparison.}
GProj uses the FA3-SBS forward, and on every capture its output is
byte-identical to that of FA3-SBS, so comparing FA3-SBS with GProj compares
backward passes only. The GProj backward targets Hopper's SM90a instructions
with head dimension 64 and a query tile of 96, against 128 in FA3. The query
correction alone (GProj-Q) leaves $O$, $dK$ and $dV$ unchanged in 96 checks over
the captures, 20 additional layouts and lengths, and four key-translation cases,
so its median $dK$ error stays at $13.2651\%$. The key pass then corrects the
FP32 key accumulator before the final BF16 cast, using BF16 probabilities and
BF16 scaled queries with FP32 accumulation. On the same execution it lowers the
median $dK$ error from $13.2651\%$ to $0.3712\%$ while leaving $O$, $dQ$ and $dV$
unchanged.

\paragraph{QY-shift.}
QY-shift (Appendix~\ref{app:comparison-methods}) ports the released Qiu--Yao BF16
code without guards or fallbacks, moving the tile loops to C++ with CUDA
pointwise kernels and cuBLAS matrix products. On nine synthetic cases (tile
boundaries, repeated and large logits, noncontiguous tensors, packed segments,
GQA) and on the eight captures, its output, LSE and gradients match the author
code byte for byte. Autocast and compilation are disabled.

\paragraph{Fused FP32 baseline.}
We built a
fused FP32 kernel. It decodes the BF16 inputs, computes in FP32 SIMT/FMA
arithmetic without TF32, and returns BF16 outputs and gradients. It saves FP32
$O$ and three FP32 row statistics: the LSE, the row maximum and the inverse
normalizer, the last two keeping normalization accurate when large logits leave
the LSE short of resolution. At 64 sequences per GPU the saved state is
$1.046875$\,GiB, and no $N^2$ matrix is stored.

\paragraph{Timing.}
Table~\ref{tab:gproj-training} measures complete training steps of the 450M
model on one H200 with batch one and 4096 tokens, under the training
configuration: FP32 master weights, BF16 compute, the Muon/AdamW optimizer,
clipping at one and activation checkpointing. Each step runs 48 attention
forward calls, including recomputation, and 24 backward calls; data loading and
communication are excluded. Each method runs on three workers, each timing 15
updates after warmup, and we report the median over workers of the mean step
time. The two panels are separate jobs on different nodes. Peak allocated memory is $9.191$\,GiB for both FA3 and GProj and
$9.182$\,GiB for eager FP32.

\section{Synthetic Numerical Suite}
\label{app:synthetic-suite}
The synthetic suite contains 58 cases that stress the layouts and numerical regimes an attention kernel must handle, together with two analytic four-key calibrations. The cases
cover packed segments, unequal lengths, empty causal rows, singleton and
tile-boundary rows, batch size two, non-causal attention, common key and value
shifts, constant values, sharp attention, zero and tiny upstream gradients, and
four 4096-token sequences, all in BF16 with head dimension 64 and GQA. The suite takes 65 seconds on
one H200 and evaluates $O,dQ,dK,dV$ for FA3, FA3-SBS and GProj.
Table~\ref{tab:synthetic-suite} summarizes the errors.

\paragraph{References.}
Each case is scored against two independent FP64 references, a centered VJP and
an explicit softmax-Jacobian VJP. They agree to within $10^{-10}+10^{-9}$ times
the target norm on all 60 cases, the worst case using 0.5154\% of that tolerance
($6.30643\times10^{-10}$), and eleven small cases further agree with autograd to
$9.99\times10^{-16}$.

\paragraph{Outcome.}
A case fails if its relative error exceeds 5\%, or if its absolute error
exceeds $10^{-6}$ where the target tensor is zero. GProj passes every case and never
increases the $dK$ or $dV$ error; its query error exceeds that of FA3-SBS in the
three cases below. Its largest relative $dQ$ and $dK$ errors in
Table~\ref{tab:synthetic-suite} both occur with tiny upstream gradients.
\begin{center}\small
\begin{tabular}{lrr}
\toprule
Case & FA3-SBS (\%) & GProj (\%)\\
\midrule
015: unequal lengths & 0.227716 & 0.227779\\
048: tiny upstream & 2.448365 & 2.493797\\
058: analytic calibration & 0.194175 & 0.582524\\
\bottomrule
\end{tabular}
\end{center}

\paragraph{Exact zeros.}
Where the exact answer is zero, GProj stays close to it. The singleton case 000
has exactly zero $dQ$ and a $dK$ residual of $5.94918\times10^{-10}$, and the
constant-value cases 028 and 041 have $dQ/dK$ residuals of
$2.02560\times10^{-9}/2.72331\times10^{-9}$ and
$2.43146\times10^{-9}/3.05945\times10^{-9}$. Under random common-key shifts the
zero-target coordinate reaches at most 0.001953125 (case 053) and 0.01171875
(case 057), against 432 for FA3-SBS in the latter.

\paragraph{Operand masses and structure.}
Over 295,008 query-head rows the BF16 probability mass $m$ is always positive
and finite, with minimum 0.9967041015625 and $|m-1|$ at most 0.00341796875. All
accumulator mappings and second-pass checks pass, GProj's $O$ and $dV$ are
byte-identical to those of FA3-SBS, and repeated runs of one packed and one
4096-token case are bitwise reproducible.

\section{Training Study}
\label{app:training-status}
\paragraph{Setup.}
All matched runs share one recipe: 16 H200 GPUs, seed 42, the same data order,
and 11,930 updates of 4,194,304 tokens (50.0B tokens). They differ only in the
attention computation. Key smoothing subtracts the per-document, per-KV-head
mean key before calling unmodified FA3 (FP32 accumulation, BF16 result);
softmax is invariant to this shift in exact arithmetic, and autograd applies
the matching correction to $dK$. Key smoothing keeps the FA3 forward; FA3-SBS,
which repairs only the forward, still enters the same large-logit regime
(Table~\ref{tab:regime-audit}). The Qiu--Yao
run was stopped at \ScratchQyTokens{} tokens, for reasons given below.
Table~\ref{tab:scratch-progress} and Figure~\ref{fig:qy-collapse} report the
all-rank mean loss and mean pre-clipping gradient norm of each ten-update
logging interval, without smoothing; when a run resumed from a checkpoint, the
resumed records replace the superseded ones.

\begingroup\let\ScratchOrigCaption\caption
\renewcommand{\caption}[1]{\ScratchOrigCaption{Matched from-scratch training (seed 42, same data order):
all-rank mean training loss and mean pre-clipping gradient norm per
10-step logging interval. Cells show loss / gnorm. All five arms complete
50.0B tokens. The Qiu--Yao arm, stopped at \ScratchQyTokens{} tokens, is reported in
Appendix~\ref{app:training-status} (Figure~\ref{fig:qy-collapse}).
``Final 0.84B'' gives mean loss / median gnorm over the 20 records in the
final 0.84B tokens; ``Norm $>$10 at'' gives the tokens at which the gradient
norm first exceeds 10.}}%
\ifdefined\ScratchTableShown\else
\global\let\ScratchTableShown\relax
\begin{table}[t]
\centering
\scriptsize
\setlength{\tabcolsep}{3pt}
\begin{tabular}{rccccc}
\toprule
Tokens (B) & FA3 & FA3-SBS & GProj & Key smoothing & FP32 attention \\
\midrule
0.8 & 4.624 / 0.827 & 4.635 / 0.790 & 4.623 / 0.918 & 4.618 / 0.804 & 4.625 / 0.852 \\
12.6 & 1.870 / 0.406 & 1.871 / 0.360 & 1.869 / 0.312 & 1.870 / 0.373 & 1.869 / 0.372 \\
23.1 & 1.783 / 0.358 & 1.779 / 0.375 & 1.773 / 0.403 & 1.776 / 0.329 & 1.772 / 0.359 \\
29.4 & 1.909 / 3.871 & 1.783 / 0.325 & 1.773 / 0.385 & 1.790 / 0.355 & 1.773 / 0.288 \\
33.6 & 2.122 / 375.508 & 1.780 / 0.364 & 1.769 / 0.396 & 1.937 / 7.381 & 1.769 / 0.354 \\
37.7 & 2.107 / 446.931 & 1.753 / 0.279 & 1.740 / 0.324 & 1.932 / 4.789 & 1.740 / 0.375 \\
41.9 & 1.949 / 70.725 & 1.749 / 0.457 & 1.728 / 0.335 & 2.026 / 138.966 & 1.728 / 0.270 \\
46.1 & 1.882 / 110.166 & 1.679 / 0.210 & 1.664 / 0.142 & 1.912 / 138.389 & 1.664 / 0.168 \\
50.0 & 1.871 / 39.015 & 1.684 / 0.125 & 1.670 / 0.076 & 1.907 / 287.047 & 1.670 / 0.095 \\
\midrule
Final 0.84B & 1.866 / 30.705 & 1.679 / 0.112 & 1.665 / 0.087 & 1.903 / 233.384 & 1.665 / 0.097 \\
Norm $>$10 at & 29.9B & never & never & 33.2B & never \\
\bottomrule
\end{tabular}
\caption{Matched from-scratch training (seed 42, same data order):
all-rank mean training loss and mean pre-clipping gradient norm per
10-step logging interval; logged interval
maxima are not used. Cells show loss / gnorm. All five arms complete
50.0B tokens. The Qiu--Yao arm, stopped at \ScratchQyTokens{} tokens, is reported in
Appendix~\ref{app:training-status} (Figure~\ref{fig:qy-collapse}).
``Final 0.84B'' gives mean loss / median gnorm over the 20 records in the
final 0.84B tokens; ``Norm $>$10 at'' gives the tokens at which the gradient
norm first exceeds 10.}
\label{tab:scratch-progress}
\end{table}
\fi
\endgroup

\paragraph{Large-logit regime.}
Each run's checkpoint at 33.6B tokens (and FA3's at
23.1B) is evaluated with exact FP32 attention on two fixed documents.
Table~\ref{tab:regime-audit} reports the regime statistics and the error that
unmodified FA3 and cuDNN would make on those operands.

\begin{figure}[t]
\centering
\includegraphics[width=\linewidth]{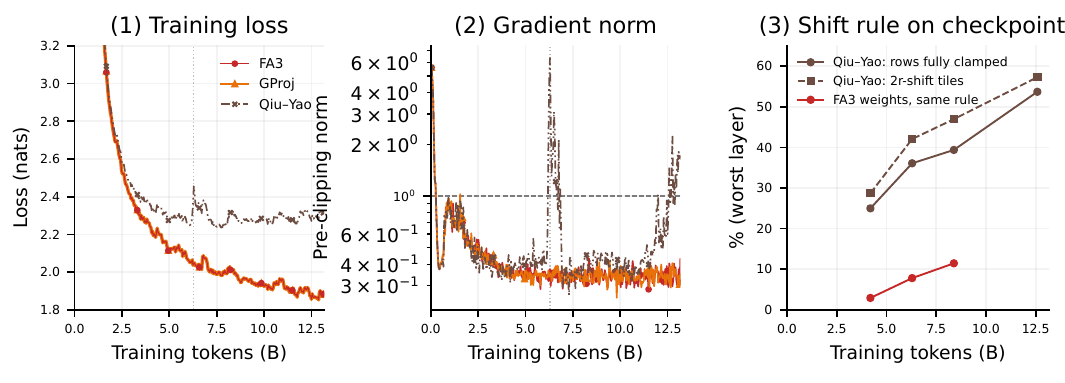}
\caption{The Qiu--Yao baseline arm. (1) All-rank mean training loss and
(2) pre-clipping gradient norm of the Qiu--Yao, FA3 and GProj arms over the
Qiu--Yao arm's \ScratchQyTokens{} tokens, all-rank means without
smoothing; the dotted line marks the spike at 6.3B tokens (step 1500).
(3) Diagnosis on the Qiu--Yao checkpoints (eight fixed
documents, every sixteenth query row): the worst-layer percentage of
query rows whose every 512-key tile sum falls below the source's
$10^{-10}$ clamp, and of legal row--tile pairs taking the $2r$ shift
branch; the FA3 arm's checkpoints under the same rule are shown for
comparison. The text of this appendix gives the mechanism.}
\label{fig:qy-collapse}
\end{figure}

\paragraph{Qiu--Yao run.}
The Qiu--Yao run falls behind FA3 from about 2.5B tokens, spikes at 6.2--6.8B
tokens (gradient norm 6.4) and then stalls near a loss of 2.28--2.30 while FA3
keeps improving, trailing it by 0.31 nats at 8.4--8.9B tokens
(Figure~\ref{fig:qy-collapse}). The cause is the dynamic shift rule, not the
attention backward. When a tile's row maximum $r>0$ has a near-tie ($c>1$), the
rule shifts by $2r$, so every weight is at most $\exp(s-2r)\le\exp(-r)$. Once
logits are large, near-ties are common, because the BF16 spacing at $|s|\ge128$
is 1, and $\exp(-r)$ leaves the FP32 normal range for $r>87$. The tile sum then falls below
the source's $10^{-10}$ clamp, and the row's output is dominated by rounding.

Replaying the author arithmetic on the saved checkpoints (eight documents,
every sixteenth query row) shows this failure growing with training. At step
1000 (4.2B tokens) the $2r$ branch is taken on 12--29\% of row--tile pairs in
layers 4--23, 5--25\% of query rows have every tile clamped, and the attention
output differs from FP32 attention on the same operands by a median relative
error of 0.5--0.9. At step 1500 (6.3B) the branch fraction reaches 18--42\%, up
to 36\% of rows are fully clamped and row maxima reach $r\approx5{,}300$; by step
3000 (12.6B) about half of the rows in layers 15--23 are fully clamped and the
output is uncorrelated with FP32 attention (median relative error $\approx1.0$).
The corrupted output in turn feeds logit growth: the run's 99th-percentile row
maxima are 5--100$\times$ those of FA3 at the same steps, where the same rule
would fully clamp at most 11\% of rows. The weights also adapt to the clamped
outputs: substituting exact FP32 attention does not improve the fixed-document
NLL from step 1000 to 2000 (3.32 to 3.30, and 3.31 at step 3000), whereas the
FA3 arm improves from 3.16 to 2.92 over the same steps; at step 1500 the
substitute is even worse than the run's own arithmetic (3.61 versus 3.51).

\section{Model and Training Configuration}
\label{app:model-configuration}
The 450M model in the main text is a Full-AttnRes transformer
(Table~\ref{tab:model-configuration}),
using learned softmax aggregation over earlier sublayer outputs
\citep{kimi2026attentionresiduals}. Its shared-read configuration uses one
residual-depth read jointly for token-attention queries, keys and values.
All training runs, captured operands and frozen-state diagnostics use
this configuration, and every run is trained from scratch.

\begin{table}[ht]
\centering\small
\begin{tabular}{lp{8.3cm}}
\toprule
Setting & Configuration\\
\midrule
Parameters & 449,465,344; referred to as 450M in the main text\\
Transformer & 24 layers, hidden width 1024\\
Attention & 16 query heads, four key/value heads, head dimension 64\\
Vocabulary & 64,256; untied input and output embeddings\\
Sequence length & 4096 tokens\\
Attention backend & Native BF16 FlashAttention-3\\
Normalization controls & Query--key normalization and softcapping disabled\\
Parameter precision & BF16 compute parameters, FP32 master parameters\\
Matrix optimizer & Muon, learning rate 0.01, momentum 0.99\\
Other parameters & AdamW, learning rate 0.001, $\beta=(0.9,0.95)$\\
Schedule & Warmup 300 updates, constant, cosine decay over the final 9,544 updates to 1\%\\
Regularization & Zero weight decay; no dropout\\
Gradient clipping & Global norm threshold 1\\
Execution & Two nodes, 16 ranks, 64 sequences per rank;
global batch 1024 sequences\\
Activation recomputation & Enabled layerwise\\
Training length & 11,930 optimizer updates (50.0B tokens), seed 42, deterministic\\
\bottomrule
\end{tabular}
\caption{Configuration of the 450M model and of every training run.}
\label{tab:model-configuration}
\end{table}

The comparison arms change only the attention computation. The plain-transformer
control has its own configuration (Appendix~\ref{app:plain-control}).

\section{Localizing the Numerical Failure}
\label{sec:historical-numerics}
All analyses use checkpoints of the
from-scratch FA3 run.

\subsection{The saved-output channel predicts the local error}
\label{sec:historical-saved-output}
FA3's backward forms the reduction $\delta_i=\sum_jp_{ij}u_i^\top v_j$ from the
saved output, a known sensitivity of low-precision attention
\citep{qiu2026lowprecision}. With probabilities and $u_i^\top v_j$ exact, an error
in this reduction produces the query-gradient error
\begin{equation}
 E_{\delta,i}=-\alpha(\widehat\delta_i-\delta_i^\star)
                     \sum_jp_{ij}^\star k_j,
 \label{eq:saved-output-channel}
\end{equation}
where $\delta^\star$ and $p^\star$ are FP64 reference values and
$\widehat\delta_i=\operatorname{dot}_{32}(u_i,\widehat o_i)$ is computed from the
native saved output. We test this prediction on native BF16 $Q,K,V,O$ and
incoming derivatives $U$ captured at layers 5 and 11, after checking that
rerunning the native forward reproduces the saved output bitwise, and compare it
with the observed error $E_i=\widehat{dq}_i-dq_i^\star$ through their cosine and
the residual fraction $\|E-E_\delta\|_2/\|E\|_2$. The analysis uses 128 strided
query rows against all 4096 keys, on an exploratory set of eight documents and a
confirmation set of 127 documents at checkpoints 5500 and 8000 (23.1B and 33.6B
tokens); the kernel comparisons elsewhere use all 4096 rows.

One document shows the effect. For a PG19 book at checkpoint 8000, layer 11,
the native $\|dQ\|_2$ is $4.30\times10^{-3}$ against a reference norm of
$4.27\times10^{-5}$, a relative error of 101, whereas the same VJP computed in FP32
has relative error 0.015. The prediction has cosine 0.9999 with the native error
and leaves a residual fraction of 0.020. The forward output, by contrast, has relative
error only 0.00193.

\begin{table}[t]
\centering\small\setlength{\tabcolsep}{4pt}
\begin{tabular}{lrrrrrrr}
\toprule
& & & & & & \multicolumn{2}{c}{After saved $O$}\\
\cmidrule(lr){7-8}
Ckpt. & Layer & $dQ$ rel. err. & Resid. frac. & Err. cos. & Saved-$O$ red. & rel. err. & cos.\\
\midrule
5500 & 5  & 6.517   & 0.0055 & 1.0000 & 2.98$\times$  & 2.165 & 0.419\\
5500 & 11 & 6.120   & 0.0060 & 1.0000 & 2.45$\times$  & 2.496 & 0.366\\
8000 & 5  & 17.889  & 0.0067 & 1.0000 & 15.73$\times$ & 1.055 & 0.686\\
8000 & 11 & 411.104 & 0.0197 & 0.9999 & 72.02$\times$ & 5.988 & 0.156\\
\bottomrule
\end{tabular}
\caption{Confirmation on 127 documents per checkpoint/layer, with 128
strided query rows and all 4096 keys. Entries are per-document medians;
relative errors are ratios, not percentages. Resid.\ frac.\ and Err.\ cos.\
compare Equation~\eqref{eq:saved-output-channel} with observed native error.
Saved-$O$ red.\ is the median paired reduction in absolute $dQ$ error after
replacing only saved $O$ by an FP64-recomputed output rounded to BF16; the last
two columns give the remaining relative $dQ$ error and the cosine between the
computed and exact $dQ$.}
\label{tab:historical-confirmation}
\end{table}

Table~\ref{tab:historical-confirmation} shows that the same holds across 127
documents. At checkpoint 8000, layer 11, the median residual fraction is 0.0197,
and replacing only the saved $O$ reduces the absolute $dQ$ error by a median
factor of 72.02; the earlier checkpoint shows the same channel with smaller
amplification.

\subsection{Fixed-forward interventions reach the full model}
\label{sec:historical-fixed-forward}
We rerun the full backward on eight documents at checkpoints 6500 and
8000 (27.3B and 33.6B tokens) and change only the attention backward of layers 5
and 11: either the native backward receives a correct saved output (recomputed
in FP64 and rounded to BF16), or the exact attention VJP is computed in FP32. All
24 attention outputs and the loss stay bitwise identical, but a change at layer
11 propagates to the derivative entering layer 5, so these are whole-model
effects.

\begin{table}[t]
\centering\small
\begin{tabular}{lrr}
\toprule
Backward intervention at layers 5 and 11 & 6500 & 8000\\
\midrule
Native                                  & 36.653 & 5353.856\\
Replace saved $O$, first row only         & 36.402 & 5353.857\\
Exact first-row VJP only                  & 36.402 & 5353.857\\
Replace saved $O$, all rows               & 2.450  & 19.746\\
FP32 VJP, all rows                        & 2.258  & 18.185\\
\bottomrule
\end{tabular}
\caption{Median full-model parameter-gradient norm over eight documents.
All 24 attention outputs and the scalar loss are bitwise identical across
modes at each document/checkpoint. ``All rows'' still changes only two
attention layers. Saved-$O$ replacement uses FP64 recomputation followed
by BF16 rounding, as in
Table~\ref{tab:historical-confirmation}.}
\label{tab:historical-fixed-forward}
\end{table}

Both repairs remove almost all of the excess gradient
(Table~\ref{tab:historical-fixed-forward}). At checkpoint 8000, correcting the saved
output lowers the median norm from 5353.856 to 19.746, and the FP32 VJP gives
18.185. At checkpoint 6500, before the training gradient norm first exceeds 10,
the native norm is already 36.653, and the same repairs bring it to about
2.3--2.5.

Repairing only the first query row, by contrast, changes almost nothing. That
row has a single legal key, so $p_{00}=1$, $o_0=v_0$ and $dq_0=0$ exactly, and
native captures violate these identities (Section~\ref{sec:historical-native-patch});
yet correcting its saved output or supplying its exact VJP barely changes the
model gradient. A violated exact identity can reveal a kernel defect without
locating the rows that dominate training.

\subsection{The source of the saved-output error}
\label{sec:historical-native-patch}
The first-row violations point to the forward softmax, which forms the
exponential argument
\begin{equation}
 r=\operatorname{fma}\!\left(x,a,-\operatorname{round}_{32}(ma)\right),
 \qquad z=2^r,
 \label{eq:historical-fma}
\end{equation}
where $m$ is the unscaled row maximum and $a$ includes the attention scale and
the base-two conversion. Because the fused multiply-add does not round $xa$
separately, $r$ need not vanish even when $x=m$, a hazard known from prior work
\citep{pytorch2024fma}. If the value product then uses a BF16 cast of $z$ while
the denominator keeps its FP32 value, a one-key row outputs
\begin{equation}
 \widehat o_0\simeq\operatorname{round}_{\mathrm{BF16}}\!\left(
     \frac{\operatorname{round}_{\mathrm{BF16}}(z)}{z}\,v_0\right)
 \label{eq:historical-one-key}
\end{equation}
rather than $v_0$, and this error enters a subtraction whose exact result is zero.

Prescribed operands confirm this model. With all 64 query coordinates equal to
$c2^k$ and all key coordinates equal to one, for $c\in\{3,5,7\}$, $k=0,\ldots,15$
and sequence lengths 1 and 128, the dot products are known exactly; these 96
configurations share 48 constructions. The scalar model reproduces every native
first-row output bitwise, including the six configurations with $o_0\ne v_0$
(Figure~\ref{fig:historical-fma}).

\begin{figure}[t]
\centering
\includegraphics[width=.95\linewidth]{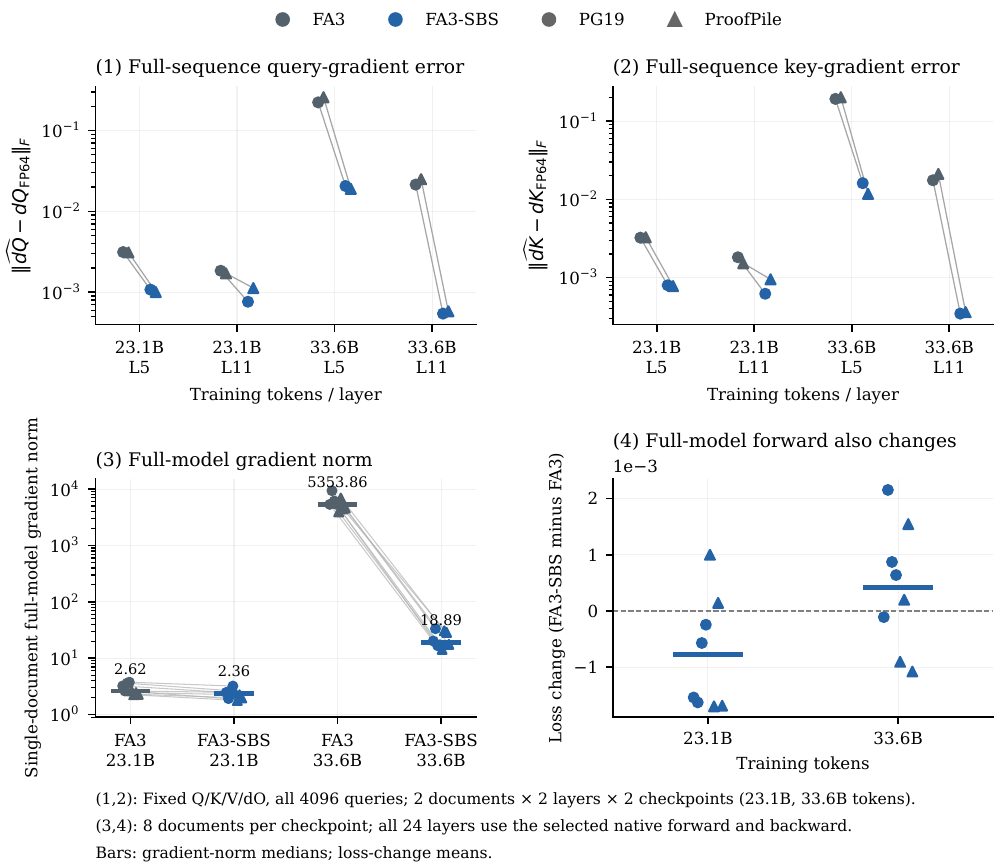}
\caption{Native source intervention on the from-scratch FA3 run. (1)--(2) Eight fixed
captures and incoming derivatives, evaluating all 4096 query rows against
FP64 attention. (3)--(4) Whole-model runs on eight documents per checkpoint,
with the FA3-SBS forward used in all 24 attention layers; unlike the
fixed-forward controls in Table~\ref{tab:historical-fixed-forward}, this
changes the forward computation.}
\label{fig:historical-native-patch}
\end{figure}

Subtracting the row maximum with an explicit rounded operation before the
base-two scaling, with the rest of the source and the compiler settings
unchanged, gives FA3-SBS, the forward repair used throughout; it supplies the
forward of GProj but no backward projection. It restores $o_0=v_0$ and $dQ=0$ in
all 96 prescribed cases and, on the four checkpoint-8000 captures, reduces the
absolute $dQ$ error by $10.8$--$43.0\times$ and the $dK$ error by
$12.0$--$58.6\times$ (Figure~\ref{fig:historical-native-patch}). Using this forward in
all 24 layers lowers the median model-gradient norm at checkpoint 8000 from
5353.856 to 18.892, with mean losses of 3.072690 and 3.073105.

\subsection{What the forward repair leaves behind}
\label{sec:historical-state-cross}
Crossing saved states shows that the improvement comes from the saved output
alone. For each capture we pair the FA3 and FA3-SBS saved outputs with their
LSEs and pass all 32 states to three backward binaries: identical states give
bitwise identical $dQ,dK,dV$ in all three, and on the late captures changing
only $O$ reduces the query-gradient error by $10.8$--$43.0\times$, while changing
only the LSE slightly worsens every case.

The repaired backward is still sensitive to where the keys sit. Setting one
query coordinate to zero and the matching coordinate of every key to a common
BF16 value $b$ leaves scores, output and LSE bitwise unchanged, and the exact
gradient in that coordinate is zero. The native error in that coordinate still
grows linearly with $b$, with FA3-SBS and even with an exact saved output; at
$b=65536$, FA3 and FA3-SBS give 665.15 and 665.54
(Appendix~\ref{app:historical-state-gauge}, Figure~\ref{fig:historical-state-gauge}).
This is the translation-invariance failure that Section~\ref{sec:channel-two} and
the $-24$ witness of Appendix~\ref{app:leakage-details} trace to the BF16 cast,
and that no forward repair can remove.

The failure is not specific to FA3. At fixed captured operands FA2 reproduces
the FA3 errors, while cuDNN and PyTorch's fused SDPA backends behave like
FA3-SBS and retain a substantial backward error
(Appendix~\ref{app:historical-backends}).

\section{Numerical Diagnosis: Details and Additional Controls}
\label{app:historical-numerics}

\subsection{Measurement sets and references}
\label{app:historical-populations}
\paragraph{Measurement sets.}
The local analysis of Appendix~\ref{sec:historical-saved-output} samples query rows
$0,32,\ldots,4064$ against all 4096 keys, on eight exploratory documents and on
127 confirmation documents, at layers 5 and 11 of checkpoints 5500 and 8000
(23.1B and 33.6B tokens). The fixed-forward interventions of
Appendix~\ref{sec:historical-fixed-forward} use all rows of eight documents at
checkpoints 6500 and 8000. The kernel comparisons use the eight complete captures
of Appendix~\ref{app:gproj-experiments}, with all 4096 query rows.

\paragraph{References.}
All references differentiate exact attention in FP64 at the represented BF16
operands. Saved outputs are recomputed in FP64 and rounded to BF16, and saved
LSEs are rounded to FP32, matching the native formats.
Figures~\ref{fig:historical-confirmation-decomposition}
and~\ref{fig:historical-confirmation-interventions} show the full distributions.

\begin{figure}[t]
\centering
\includegraphics[width=\linewidth]{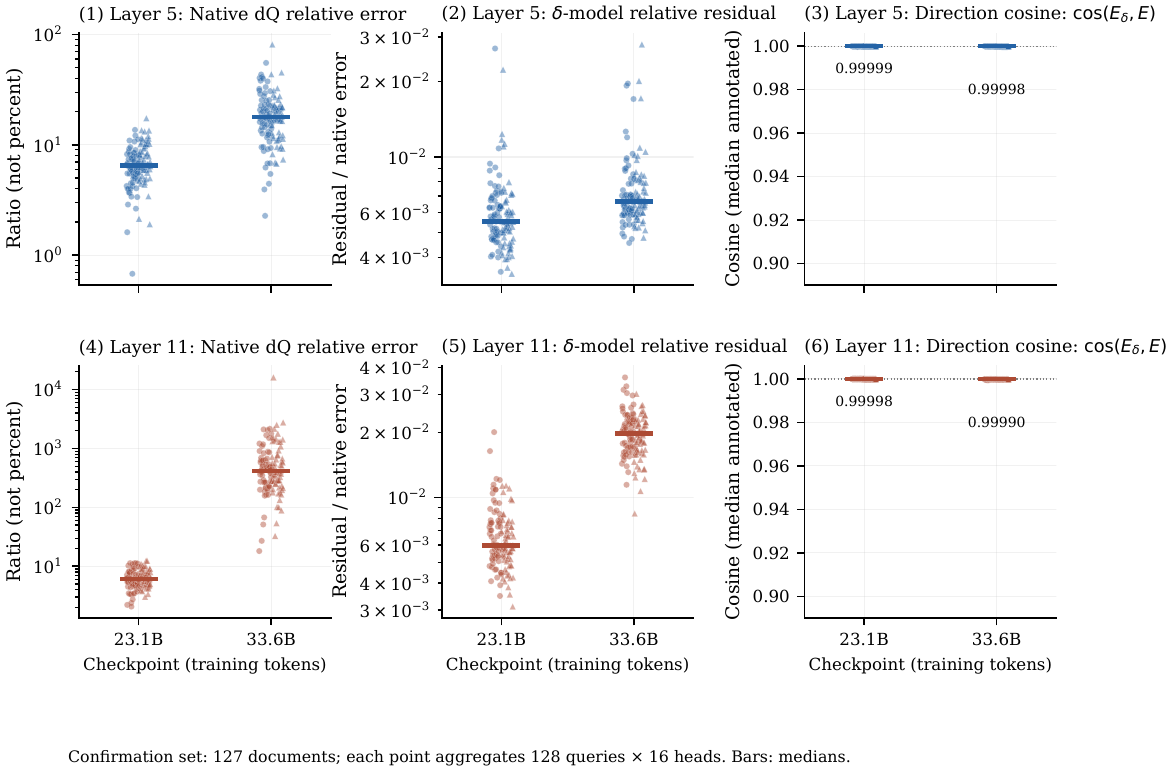}
\caption{The complete 127-document confirmation distributions. Each point
pools 128 sampled query rows and 16 query heads for one document, against
all 4096 keys; bars are medians. Columns show relative native $dQ$ error,
the unexplained fraction after the saved-output prediction, and the error
cosine.}
\label{fig:historical-confirmation-decomposition}
\end{figure}

\begin{figure}[t]
\centering
\includegraphics[width=\linewidth]{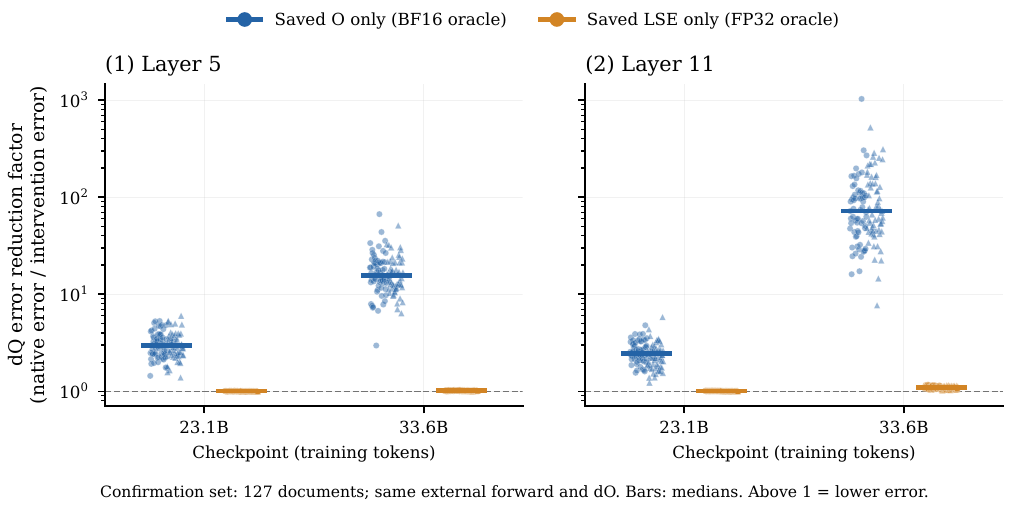}
\caption{Saved-state interventions on the same 127-document confirmation
set and sampled query rows. Each point is a paired absolute-$dQ$-error
reduction, with one as the no-improvement reference. Saved $O$ is recomputed
in FP64 and rounded to BF16; LSE is recomputed in FP64 and rounded to FP32.
Native forward outputs
and incoming derivatives remain fixed.}
\label{fig:historical-confirmation-interventions}
\end{figure}

\subsection{Prescribed FMA cases}
\label{app:historical-source-controls}
The 48 constructions of Appendix~\ref{sec:historical-native-patch} set all 64
coordinates equal, so every dot product is known exactly, and run at lengths 1
and 128 for 96 configurations. The incoming derivative is nonzero only at the
first query, whose support is a single key at both lengths. The scalar model
uses a libm fused multiply-add, an FP32 base-two exponential, BF16 rounding of
the value-product operand and an FP32 denominator; it matches all 96 native
first-row outputs bitwise, including the six violations.

\begin{figure}[t]
\centering
\includegraphics[width=.94\linewidth]{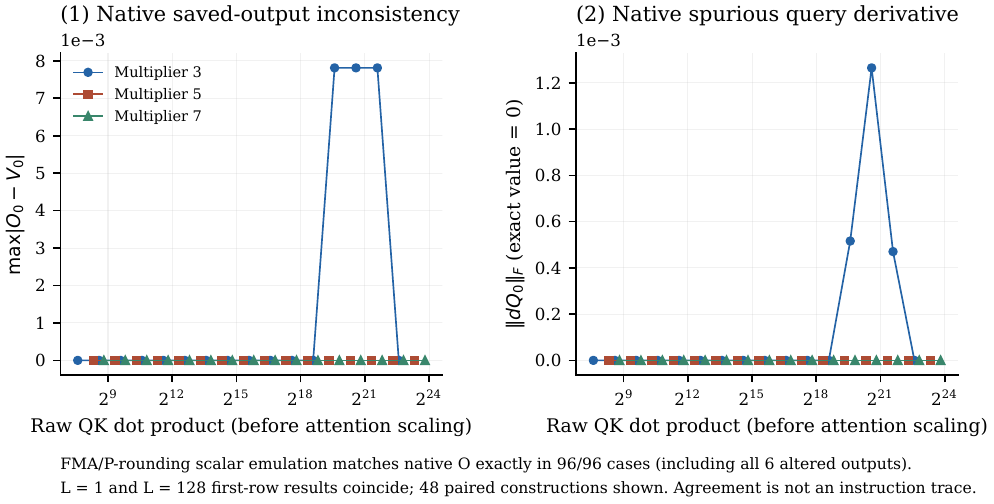}
\caption{Prescribed one-key analysis. The 96 executions share 48
query/key parameter constructions across two sequence lengths. Exact
mathematics requires the output/value identity and a zero query gradient.
The scalar arithmetic hypothesis reproduces the observed output in every
case, including the six violations.}
\label{fig:historical-fma}
\end{figure}

The explicit rounded subtraction of FA3-SBS restores both identities in all 96
prescribed cases, and on the eight real captures its first-row outputs also
equal their values.

\subsection{Saved-state crosses and the remaining gauge defect}
\label{app:historical-state-gauge}
Table~\ref{tab:historical-cross-state} separates the two saved quantities on the
four late captures (Appendix~\ref{sec:historical-state-cross}): the FA3-SBS saved
output alone reproduces the whole improvement, while its LSE alone slightly
worsens every case, as factors below one indicate. Every identical supplied
state gives bitwise identical gradients across the three backward binaries.

\begin{table}[ht]
\centering\small
\begin{tabular}{llrr}
\toprule
Saved $O$ & Saved LSE & $dQ$ error reduction & $dK$ error reduction\\
\midrule
FA3-SBS & FA3 & $10.822$--$42.971$ & $11.968$--$58.601$\\
FA3 & FA3-SBS & $0.99676$--$0.99806$ & $0.99636$--$0.99780$\\
FA3-SBS & FA3-SBS & $10.815$--$43.005$ & $11.954$--$58.562$\\
\bottomrule
\end{tabular}
\caption{Ranges of paired absolute-error reduction on the four checkpoint-8000
full-row captures. The improvement from the source intervention
is recovered by its saved output with the FA3 LSE.}
\label{tab:historical-cross-state}
\end{table}

The gauge construction uses sequence length 128, four query heads, two KV
heads, dimension 64 and scale $1/8$. Query coordinate zero is identically zero,
the matching coordinate of every key is set to $b$, another coordinate carries a
score gap of 0, 1 or 64, and values and upstream derivatives are one fixed
random draw. Across three binaries, all 144 byte-level comparisons confirm that
output and LSE are unchanged as $b$ varies, and in all 126 nonzero-offset
comparisons the erroneous coordinate divided by $b$ is exactly the same vector:
the error is linear in $b$. At gap 64 the slopes are 0.0101494 for FA3 and
0.0101553 for FA3-SBS, which give the reported 665.15 and 665.54 at $b=65536$
(Figure~\ref{fig:historical-state-gauge}). An exact saved output still leaves a
linear error, consistent with the $-24$ witness of
Equation~\eqref{eq:minus24-witness} (Appendix~\ref{app:leakage-details}), which
isolates the cast channel with exact saved output and reduction.

\begin{figure}[t]
\centering
\includegraphics[width=\linewidth]{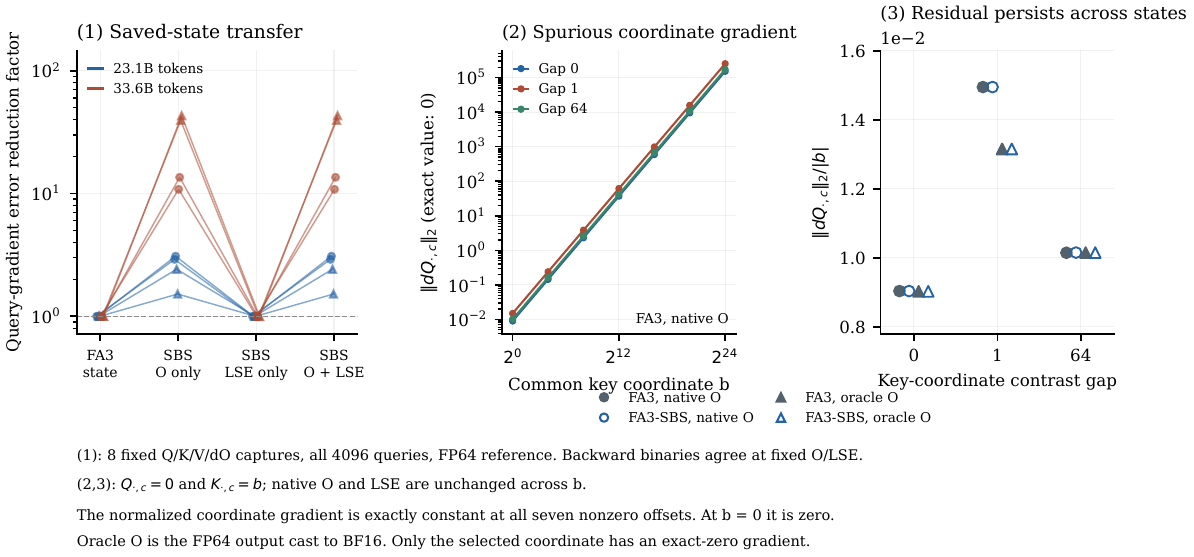}
\caption{Two controls. (1) Saved-output/LSE crosses
on fixed real captures isolate the observed improvement to saved output.
(2)--(3) A synthetic common-key coordinate changes no score or native forward
state but amplifies a mathematically zero query-gradient coordinate.}
\label{fig:historical-state-gauge}
\end{figure}

\subsection{Backend comparison at identical captured operands}
\label{app:historical-backends}
Table~\ref{tab:historical-backends} compares attention backends on the eight
captures at fixed $Q,K,V,U$, against the FP64 reference, using PyTorch 2.8.0,
CUDA 12.8 and cuDNN 9.10 on one H200; the efficient SDPA path needs explicit
KV-head expansion, and cuDNN cannot run the length-one cases.

\begin{table}[ht]
\centering\small\setlength{\tabcolsep}{3pt}
\begin{tabular}{lrrrrc}
\toprule
Backend & $dQ$ error & $dK$ error & $dQ$ cosine & Late L11 $dQ$ error & One-key violations\\
\midrule
FA3                             & 7.729 & 0.8555 & 0.131 & 162.3 & 6/96\\
FA3-SBS                         & 2.187 & 0.1327 & 0.414 & 4.12  & 0/96\\
FA2, 2.8.3.post1                & 7.729 & 0.8555 & 0.131 & 162.3 & 6/96\\
SDPA cuDNN                     & 2.206 & 0.1287 & 0.412 & 4.16  & 0/48\\
SDPA flash                     & 2.217 & 0.1289 & 0.410 & 3.90  & 0/96\\
SDPA efficient, KV expanded    & 2.147 & 0.1271 & 0.422 & 4.04  & 0/96\\
SDPA math, BF16                & 0.0037 & 0.0035 & 1.000 & 0.0077 & 0/96\\
SDPA math, FP32                & 0.0031 & 0.0032 & 1.000 & 0.0074 & 0/96\\
\bottomrule
\end{tabular}
\caption{Fixed-operand backend sweep on the from-scratch FA3 run's captures. Errors are relative
$L_2$ ratios, not percentages; the first three numeric columns are
medians over eight full-row captures. ``Late L11'' is the checkpoint-8000
PG19 layer-11 capture. One-key violations count prescribed cases with
$o_0\ne v_0$ (the same cases have nonzero $dq_0$); cuDNN cannot run the 48
length-one cases.}
\label{tab:historical-backends}
\end{table}

FA2 reproduces FA3: per-capture relative errors differ by less than
$3\times10^{-8}$, and it shows the same six one-key violations. cuDNN and
PyTorch's flash and efficient SDPA backends behave like FA3-SBS in the forward,
yet their median relative query-gradient error remains about 2.2, against
0.003--0.004 for the materialized math paths, and every fused backend returns a
nonzero first-row $dQ$ on all eight captures, including those with $o_0=v_0$. The
backward error is thus shared across fused BF16 backends.

\section{A Plain-Transformer Control}
\label{app:plain-control}
We train three
independently initialized standard-residual transformers (102M parameters; 12
layers, width 512, eight query heads, two KV heads, head dimension 64, no QK
normalization) for 8,000 updates at batch 8 and sequence length 4096, each at a
base learning rate (LR1) and at a fourfold stress (LR4). Periodically, and at
training-batch gradient peaks, we replace every attention backward by an FP32
reference while keeping all attention outputs and the loss bitwise fixed.

In this model the attention backward stays close to the FP32 reference. At 500 updates the full-model
gradient discrepancy is 0.26--0.27\% at LR1 and 1.5--2.4\% at LR4
(Table~\ref{hist:tab:plain-gradient-audits}), and the worst values over all
periodic checks are 0.88\% and 12.4\%. The LR4 runs do become unstable, with
every final update clipped, training NLL of 5.34--5.40 against 3.54--3.57 at LR1,
and gradient-norm peaks of 2,152--4,914. At the peak of seed 2026 (update 5054),
the FP32 backward gives a norm of 4,988.69 against 4,913.88 natively
(Table~\ref{hist:tab:plain-peaks}). The difference from the 450M FA3 run lies in scale: the plain models'
maximum query RMS stays at 42.5--44.5, against above 200 in layer 11 of the FA3
run at 33.6B tokens (Figure~\ref{hist:fig:plain-long}).

\begin{table}[!htb]
\centering\small\setlength{\tabcolsep}{5pt}
\begin{tabular}{rrrrrrr}
\toprule
Seed & LR & Rel. err. 0 & Rel. err. 250 & Rel. err. 500 & Abs. err. 500 & Cosine 500 \\
\midrule
2026 & 1 & 0.00177832 & 0.00187079 & 0.00261074 & 0.00498338 & 0.999996597 \\
2026 & 4 & 0.00177832 & 0.00340782 & 0.0243191 & 0.0514253 & 0.999706727 \\
2027 & 1 & 0.00193911 & 0.00194124 & 0.00264846 & 0.00479022 & 0.999996495 \\
2027 & 4 & 0.00193911 & 0.00345874 & 0.0156458 & 0.0314603 & 0.999878256 \\
2028 & 1 & 0.00186521 & 0.00200166 & 0.0026835 & 0.00528956 & 0.999996407 \\
2028 & 4 & 0.00186521 & 0.00360284 & 0.015028 & 0.0348366 & 0.999897825 \\
\bottomrule
\end{tabular}
\caption{Full-model gradient analyses of three independent 101,986,816-parameter plain transformer initializations. All attention forward outputs and loss are bitwise matched; only all-attention backward is replaced by FP32 reference arithmetic. Relative error uses the reference gradient norm.}
\label{hist:tab:plain-gradient-audits}
\end{table}

\begin{table}[!htb]
\centering\small\setlength{\tabcolsep}{4pt}
\begin{tabular}{rrrrrr}
\toprule
Seed & Peak step & Native norm & Reference norm & Rel. error & Cosine\\
\midrule
2026&5054&4913.875&4988.690&0.018294&0.999944\\
\bottomrule
\end{tabular}
\caption{The LR4 training-batch peak of seed 2026. The analysis replaces every attention backward on the identical native forward and accumulates all eight sequence gradients in FP32 with training's scored-token weights.}
\label{hist:tab:plain-peaks}
\end{table}

\begin{figure}[!htb]
\centering \includegraphics[width=\linewidth]{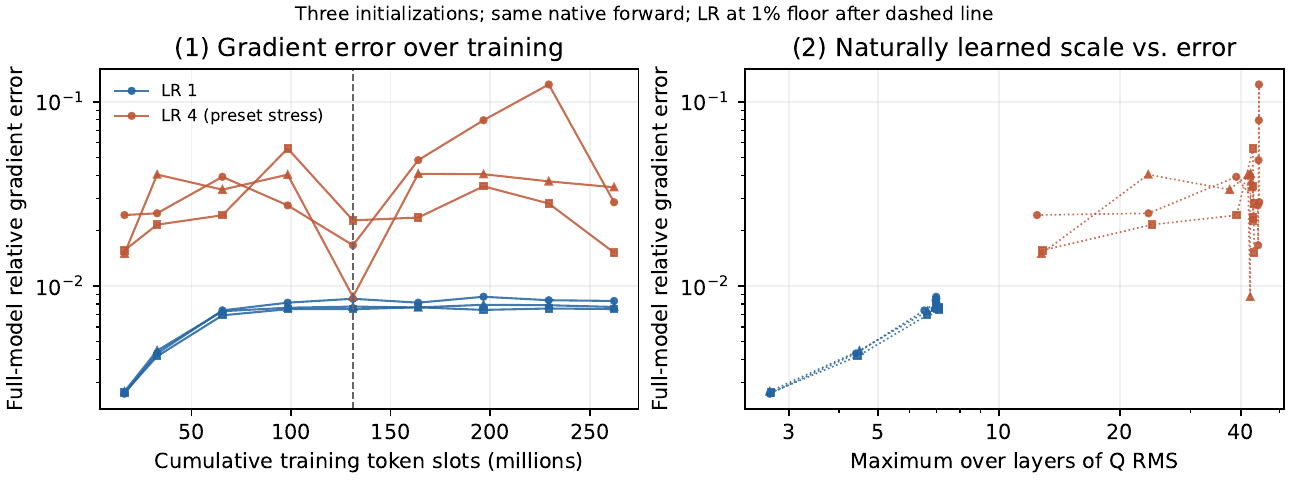}
\caption{All six completed plain-transformer trajectories. The dashed line marks the learning-rate-floor transition at update 4000.}
\label{hist:fig:plain-long}
\end{figure}

\end{document}